\documentclass{article}
\usepackage{iclr2027_conference,times}
\usepackage[T1]{fontenc}
\usepackage[utf8]{inputenc}
\usepackage{amsmath,amssymb,amsthm,mathtools}
\usepackage{graphicx,booktabs,tabularx,array,multirow}
\usepackage{algorithm}
\usepackage{algpseudocode}
\usepackage{xcolor}
\usepackage{hyperref}
\usepackage{xurl}
\hypersetup{hidelinks,pdftitle={RA-MoWE: Workflow-Affinity Embeddings for Query Clustering and Agentic Workflow Generation},pdfauthor={Anonymous}}
\newtheorem{proposition}{Proposition}
\newtheorem{theorem}{Theorem}
\newtheorem{lemma}{Lemma}

\theoremstyle{remark}

\newcommand{\E}{\mathbb{E}}

\newcommand{\method}{RA-MoWE}
\DeclareMathOperator*{\argmax}{arg\,max}
\DeclareMathOperator*{\argmin}{arg\,min}

\title{RA-MoWE: Workflow-Affinity Embeddings\\for Query Clustering and\\Agentic Workflow Generation}
\author{
  Qi Cheng \\
  Rutgers University\\
  New Brunswick, NJ\\
  \texttt{q.cheng@rutgers.edu} \\
  \And
  Shengyu Chen \\
  University of Pittsburgh \\
  Pittsburgh, PA \\
  \texttt{shc160@pitt.edu} \\
  \And
  Wei Cheng \\
  NEC Labs America\\
  Princeton, NJ \\
  \texttt{weicheng@nec-labs.com} \\
  \And
  Yiqun Xie \\
  University of Maryland\\
  College Park, MD \\
  \texttt{xie@umd.edu} \\
  \And
  Haoyu Wang \\
  NEC Labs America\\
  Princeton, NJ \\
  \texttt{haoyu@nec-labs.com} \\
  \And
  Haifeng Chen \\
  NEC Labs America\\
  Princeton, NJ \\
  \texttt{haifeng@nec-labs.com} \\
  \and
  Xiaowei Jia \\
  Rutgers University\\
  New Brunswick, NJ\\
  \texttt{xj159@cs.rutgers.edu} \\
}
\iclrfinalcopy
\begin{document}
\maketitle
\begin{abstract}
Agentic workflows enable large language models (LLMs) to solve complex tasks by coordinating reasoning, tool use, and verification. However, a workflow optimized for an entire task collection can overlook differences in the reasoning strategies that individual queries need, while searching for a new workflow for every query repeats costly optimization. To address this tradeoff, we introduce RA-MoWE, a framework that uses workflow-affinity embeddings to cluster queries and guide the generation of reusable expert workflows. Each embedding records how well a fixed set of reference workflows solves a query, revealing similarities in which reasoning strategies are effective. RA-MoWE uses each cluster's queries and average embedding to initialize and refine a specialized workflow through execution feedback. An embedding encoder predicts these embeddings from query text, allowing new queries to select a generated expert without first executing the reference workflows. On a 300-query test set drawn from four benchmarks spanning mathematics, science, and programming, RA-MoWE improves average task score by 4.04 percentage points over selecting among the reference workflows, while using 27.7\% fewer language-model calls at inference.
\end{abstract}
\section{Introduction}
\label{sec:introduction}
Agentic workflows organize large language model (LLM) calls into procedures for reasoning, revising answers, and using tools, extending their capabilities on complex tasks \citep{react2023,selfrefine2023}. Their design determines which operations run and how they interact, affecting both solution quality and inference cost: repeated sampling, revision, and decomposition offer different ways to spend computation. Manually choosing a suitable design for a diverse collection of queries is therefore a substantial engineering effort \citep{adas2025,aflow2025}.

Recent methods automate workflow design through search and learned generation. ADAS and AFlow explore code-represented agents using execution feedback, while AgentSquare searches combinations of reusable modules \citep{adas2025,aflow2025,agentsquare2025}. 
In these works, the workflow is often optimized over a collection of tasks using a collection-level score 
and thus can overlook differences in individual query need. 
In contrast, some approaches, such as FlowReasoner~\citep{flowreasoner2025}, learn to generate a workflow for each query. These methods need to repeat costly optimization and also have less training samples for each workflow. 
Addressing such tradeoff requires 
identifying which queries should share a workflow and what computations each group needs. 
While a natural solution is to group queries based on their similarity. However, conventional semantic similarity does not necessarily capture whether queries benefit from the same reasoning strategies.

Our key idea is to introduce a \emph{workflow-affinity embedding} and \emph{group queries by how they respond to different reasoning workflows}. 
In particular, the \emph{workflow-affinity embedding} is a vector whose coordinates estimate a query's task score under a fixed set of reference workflows, called probes. The resulting vector captures a query's relative strengths and weaknesses across different reasoning strategies, including which reasoning strategies succeed, partially succeed, or fail. Queries with similar profiles can therefore be grouped together based on their computational needs rather than their semantic similarity.


We integrate this idea into \method{}, a framework that uses the workflow-affinity embeddings to guide the generation of reusable expert workflows. Offline, we execute the reference workflows on training queries, compute their embeddings, and partition them into clusters. Clustering these embeddings groups queries with similar measured responses, giving expert construction both a focused set of examples and evidence about potentially useful reasoning strategies.   For each cluster, we introduce an embedding-guided search mechanism, which uses the cluster prototype to choose an executable starting workflow and constrain the candidate operations. It then proposes refinements informed by execution feedback and probe outcomes on remaining failures, retaining edits that improve the cluster's validation score. The resulting expert is stored for reuse. Figure~\ref{fig:affinity-profiles} connects the measured cluster affinities to the workflows produced by this process.

\begin{figure}[t]
\centering
\includegraphics[width=\linewidth]{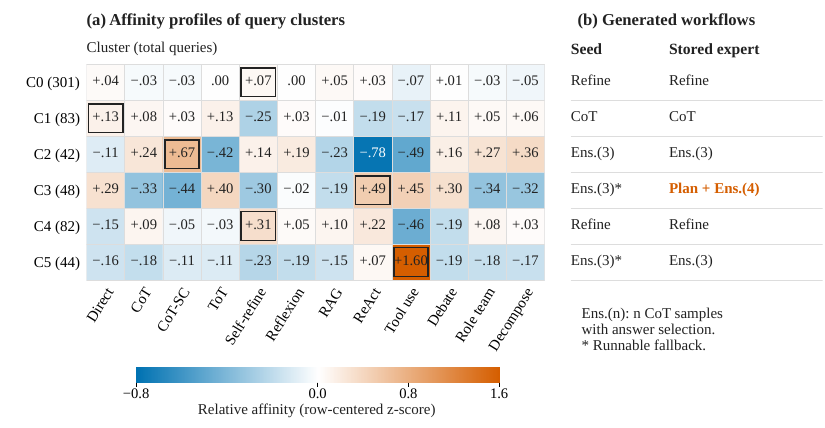}
\caption{\textbf{Different workflow preferences motivate specialized experts.} (a) Average embeddings for six GPT-4o-mini training-query clusters show distinct responses to reference workflows. C2 favors sampling and answer selection (CoT-SC), whereas C4 favors self-refinement. (b) \method{} uses these preferences to choose initial workflows. Values are standardized scores centered within each row; outlines mark row maxima, and parentheses give full cluster sizes. Refine revises a chain-of-thought (CoT) answer; Ens.($n$) selects among $n$ sampled CoT answers; Plan adds decomposition. Asterisks mark runnable substitutes for unavailable workflows. Measurement details are in Appendix~\ref{app:guided-construction}.}
\label{fig:affinity-profiles}
\end{figure}

We then create an \emph{embedding encoder}, which  makes the generated experts accessible from query text alone. 
While workflow-affinity embeddings can be measured offline by executing the reference workflows, this process is too costly to repeat for every new query. We therefore train the encoder to predict a query's workflow-affinity embedding directly from its text, using the measured embeddings as supervision. 
At deployment, the predicted embedding selects a cluster's expert without executing the reference workflows.

We evaluate \method{} across mathematics, science, programming, and repository repair against six families of adapted search and routing methods. On 300 test queries drawn from four benchmarks in mathematics, science, and programming, \method{} improves average task score by 4.04 percentage points over a system that selects from the reference workflows, while using 27.7\% fewer LLM calls at inference. Both systems choose a workflow based on the query text. Further experiments examine whether queries consistently benefit from particular workflows, how clustering affects the quality of generated workflows, and how accurately the encoder assigns new queries to experts.

Our contributions are:
\begin{enumerate}
\setlength{\itemsep}{0pt}
\item \textbf{Workflow-affinity embeddings} that represent queries through measured workflow outcomes and organize training examples for specialist construction.
\item \textbf{An embedding encoder} that predicts this representation from text, enabling expert selection without online reference-workflow execution.
\item \textbf{Embedding-guided expert generation} that uses cluster affinities, query examples, and failure feedback to initialize and refine reusable workflows.
\end{enumerate}
Our analysis identifies when measurement and prediction errors preserve useful decisions and when the available workflow operations support transfer from a strong probe to a generated expert.

\section{RA-MoWE: embedding-guided workflow generation}
\label{sec:problem}
Given training queries with reference answers or executable tests, our goal is to construct reusable workflows that serve different computational needs. \method{} uses a shared representation to connect three decisions: which queries should be grouped together, how to build an expert for each group, and which expert should answer a new query. We first measure each query's response to reference workflows and cluster the resulting embeddings. These embeddings guide expert generation and supervise an encoder that selects an expert from query text. Figure~\ref{fig:pipeline} illustrates the framework; Algorithm~\ref{alg:ramowe} summarizes its construction and deployment.

\begin{figure}[!htbp]
\centering
\includegraphics[width=\linewidth]{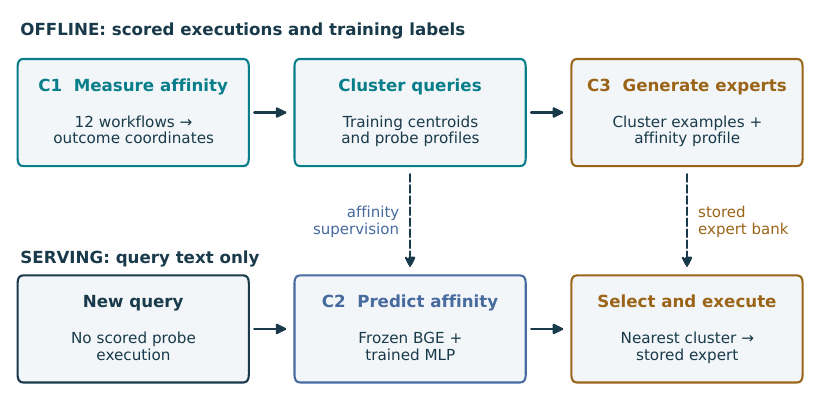}
\caption{The RA-MoWE framework. Offline, reference-workflow scores define query embeddings for clustering, expert generation, and encoder supervision. At deployment, the encoder predicts an embedding from query text and selects a stored expert. Dashed arrows indicate saved supervision and experts. C1--C3 mark the embedding, encoder, and generation contributions.}
\label{fig:pipeline}
\end{figure}

\subsection{Workflow-affinity embeddings for query clustering}
\label{sec:embedding}
A useful query group should reflect which reasoning procedures help its members. Questions about the same topic can require different computations, such as comparing several candidate answers or revising an initial solution. We therefore represent a query by its performance under a fixed panel of diverse reference workflows, called \emph{probes}. These probes apply established strategies such as chain-of-thought, self-consistency, and self-refinement \citep{cot2022,selfconsistency2023,selfrefine2023}.

For a query $q$ and probe $p_k$, let $Y_{k,s}(q)\in[0,1]$ be the task score from execution $s$ under a fixed execution and scoring protocol. Averaging $S$ repetitions gives
\begin{equation}
\bar f_k(q)=\frac{1}{S}\sum_{s=1}^{S}Y_{k,s}(q),\qquad
\bar f(q)=\bigl(\bar f_1(q),\ldots,\bar f_K(q)\bigr)^\top,
\label{eq:affinity}
\end{equation}
where $K$ is the number of probes. The vector $\bar f(q)$ retains the strengths and weaknesses of all reference workflows, including partial successes and multiple successful strategies. We standardize its coordinates using training statistics to obtain the \emph{workflow-affinity embedding} $\bar z(q)$. This prevents differences in score variance from dominating the comparison between queries; the normalization and measurement settings are given in Appendix~\ref{app:method}.

We cluster the training queries in the  embedding space using $k$-means, yielding $J$ groups with centers $C=(c_1,\ldots,c_J)$. Each group supplies examples for a reusable expert. 
Grouping queries by their workflow responses therefore provides workflow construction with both relevant examples and evidence about which reasoning strategies are effective for those examples. 
Such evidence also has a precise interpretation within the probe panel: if two queries' expected probe scores differ by at most $\epsilon$ in every coordinate, transferring the best probe from one to the other loses at most $2\epsilon$ in expected score. Proposition~\ref{prop:measurement} and its proof relate this decision guarantee to measurement accuracy. The next step uses the resulting groups to construct workflows beyond the reference panel.

\subsection{Embedding-guided expert generation}
\label{sec:generation}
Each cluster supplies two complementary inputs for expert generation: (1) examples to optimize and (2) evidence about useful reasoning operations. For cluster $j$, let $V_j$ be the subset of its training queries used to evaluate candidate workflows. We summarize their measured embeddings by
\begin{equation}
\rho_j=\frac{1}{|V_j|}\sum_{q\in V_j}\bar z(q),
\label{eq:cluster-profile}
\end{equation}
where $|V_j|$ is the number of queries in this subset. The prototype vector $\rho_j$ describes the cluster's relative strengths across the reference workflows. It provides search with a starting strategy and evidence for choosing subsequent operations.

Search begins by mapping the highest-ranked runnable probe in $\rho_j$ to an initial workflow. We construct workflows from executable operations such as planning, sampling, answer selection, and revision, following the modular-search principle of \citet{aflow2025,agentsquare2025}. The same embedding restricts candidate operations using the lowest-ranked probes, while retaining the operation chosen for initialization. In this way, the measured response guides both where search starts and which changes it considers. The value of the initialization depends on how well the executable workflow preserves the probe's useful behavior; Proposition~\ref{prop:seed} characterizes this connection.

Refinement focuses on the queries the current workflow has not solved. An LLM optimizer receives failure examples, summaries of probe scores on failed and solved queries, and previous search attempts. These summaries identify strategies that may address the remaining failures. The optimizer proposes an edit, which is executed on $V_j$ and retained only if its average score improves by a prescribed margin. Repeating this process produces expert $w_j$; the collection $\mathcal W=\{w_1,\ldots,w_J\}$ forms the reusable expert bank. The operator mapping, proposal rules, and acceptance margin are specified in Appendix~\ref{app:method}. Search uses construction data, while the separate test pool evaluates the resulting bank.

\begin{algorithm}[!t]
\caption{RA-MoWE: constructing and using a bank of workflow experts}
\label{alg:ramowe}
\begin{algorithmic}[1]
\Require Training queries and task scorers; reference workflows; cluster count $J$; search resources
\State Measure and standardize the workflow-affinity embedding $\bar z(q)$ of each training query
\State Train the embedding encoder to predict $\bar z(q)$ from query text
\State Cluster the measured embeddings into $J$ groups with centers $C$
\For{each cluster $j$}
  \State Choose construction queries $V_j$ and compute their average embedding $\rho_j$
  \State Initialize $w_j$ from a high-affinity runnable probe; restrict candidate operations using $\rho_j$
  \For{each allowed search round}
    \State Use probe scores and remaining failures to guide an LLM-proposed edit
    \State Evaluate on $V_j$; accept only if the score improvement exceeds the required margin
  \EndFor
  \State Store the final expert $w_j$
\EndFor
\State \Return Trained encoder, cluster centers $C$, and expert bank $\mathcal W$
\Statex \textbf{Deployment:} predict a new query's embedding, select the nearest cluster, and run its expert.
\end{algorithmic}
\end{algorithm}

Algorithm~\ref{alg:ramowe} presents the shared-clustering framework; the recorded alignment between construction and deployment clusters is detailed in Appendix~\ref{app:guided-construction}.

\subsection{Learning the embedding encoder}
\label{sec:encoder}
The embeddings used for construction require scored probe executions. To select an expert for a new query without repeating that measurement, we train an encoder to predict the same coordinates from text. Let $\psi(q)$ be frozen text features and $g_\theta$ a trainable prediction head with parameters $\theta$. The predicted embedding is
\begin{equation}
\hat z(q)=g_\theta(\psi(q))\in\mathbb R^K.
\label{eq:encoder-prediction}
\end{equation}
Training pairs each query with its measured embedding $\bar z(q)$. We use frozen BGE features \citep{bge2024} as inputs to 
a small prediction head, which is trained to predict the measured embedding from query text.  
The loss encourages agreement in the relative pattern of probe responses and penalizes errors in the coordinates used for clustering. Predicting the full embedding also allows the same learned target to support different cluster partitions. The loss, architecture, and training protocol are specified in Appendix~\ref{app:method}.

At deployment, the predicted embedding selects the closest cluster center:
\begin{equation}
a(q)=\argmin_{1\le j\le J}\|\hat z(q)-c_j\|_2^2.
\label{eq:deployment}
\end{equation}
Here $c_j$ is the center of cluster $j$, $\|\cdot\|_2$ is the Euclidean norm, and $a(q)$ is the selected expert's index. We then execute the stored workflow $w_{a(q)}$ on $q$. Both probe measurement and workflow construction are therefore performed offline and reused across future queries.

The prediction need not recover every coordinate exactly to preserve an expert choice. If the prediction error is smaller than the measured embedding's distance to the nearest cluster boundary, the two embeddings select the same expert. Proposition~\ref{prop:routing} and Appendix~\ref{app:routing-proof} formalize this condition and its counterpart for expected probe scores. This motivates evaluating the encoder through both prediction error and the resulting expert assignments. Appendix~\ref{app:composed-bound} combines the conditions for measurement, generation, and assignment into an analysis of the complete pipeline.

\section{Empirical evaluation}
\label{sec:experiments}

We test the three links in RA-MoWE: whether workflow responses reveal reusable specialization, whether the encoder preserves useful assignments from text, and whether affinity profiles guide the construction of effective workflows. Each experiment connects an intermediate design choice to downstream utility.

\paragraph{Tasks and evaluation.}
MixBench-H contains 600 training and 300 test queries spanning AIME, GPQA, CodeContests, and LiveCodeBench~\citep{aime_source,gpqa2024,codecontests2022,livecodebench2024}. We evaluate GPT-4o-mini and Claude Haiku 4.5 separately. SWE-bench Verified~\citep{swebench2024} uses a repository-stratified 300/200 split and a fixed Agentless-lite localize--repair pipeline~\citep{agentless2025}. Mixed-task utility is mean answer correctness or public-test pass fraction; SWE utility is harness resolution. Comparisons within each regime share queries and scoring contracts. The main comparison tests the embedding-to-bank interface using cluster-conditioned AFlow search on mixed tasks and cluster policy selection on SWE. A separate experiment evaluates embedding-guided expert generation. We report paired stratified bootstrap intervals and Holm-adjusted sign-flip tests for the historical routing families; the embedding-guided search contrasts use exploratory, unadjusted query-bootstrap intervals.

\begin{table}[t]
\centering\small
\setlength{\tabcolsep}{5pt}
\caption{Baseline comparison on three regimes (higher utility is better). Scores are comparable within columns. $A^{\dagger}$ uses measured test affinities; $B$ predicts them from text. $\dagger$ indicates scorer feedback on the evaluated query, and $\ddagger$ public-test execution feedback. FlowReasoner has $\dagger$ only on mixed tasks. Search baselines are the documented local adaptations, not published full-system reproductions. Dashes denote unrun configurations. Bold identifies significant gains over $D$ under measured assignment.}
\label{tab:historical}
\label{tab:main-baselines}
\begin{tabular}{lrrr}
\toprule
 & \multicolumn{2}{c}{MixBench-H} & SWE Verified\\
Method & GPT-4o-mini & Haiku 4.5 & GPT-4o-mini\\
\midrule
Direct & .2872 & .5694 & .2000\\
Train-selected probe & $.3481^{\ddagger}$ & .6894 & .1750\\
Global comparator $D$ & .2722 & .5381 & .1750\\
AFlow search & .2722 & .5381 & .1900\\
ADAS (executor-class meta-model) & .2042 & .2489 & .2000\\
ScoreFlow (SF-0/ICPO) & .2756 & --- & .2100\\
FlowReasoner (search only) & $.3565^{\dagger}$ & $.7465^{\dagger}$ & .2050\\
MasRouter (policy routing) & --- & --- & .1700\\
FrugalGPT (policy cascade) & --- & --- & .2050\\
\midrule
Semantic BGE clusters & .2939 & .5303 & .1800\\
Semantic Qwen3 clusters & .3050 & .5861 & ---\\
Random clusters & .2767 & .5228 & .1800\\
RA-MoWE: text-predicted $B$ & .2922 & .5214 & .2000\\
RA-MoWE: measured $A^{\dagger}$ & .2742 & \textbf{.6097} & \textbf{.2450}\\
\bottomrule
\end{tabular}
\end{table}

\paragraph{Comparisons test grouping and construction choices.}
Table~\ref{tab:main-baselines} compares direct execution, a training-selected probe, and adaptations of AFlow~\citep{aflow2025}, ADAS~\citep{adas2025}, ScoreFlow~\citep{scoreflow2025}, and FlowReasoner~\citep{flowreasoner2025}. Semantic BGE/Qwen3~\citep{bge2024,qwen3embedding2025} and random partitions test the choice of grouping signal; SWE also includes policy-level MasRouter~\citep{masrouter2025} and FrugalGPT~\citep{frugalgpt2024}. The global comparator $D$ searches the undivided mixed-task pool with the same search rule and five optimization rounds after initialization. Both settings allocate 600 validation queries per round in total, although the six-cluster bank requires more proposal calls. SWE $D$ is the training-selected policy. Appendix~\ref{app:baseline-adaptations} details the adaptations and resource accounting; Appendix~\ref{app:router-cost} reports heterogeneous-model routing separately.

\subsection{Workflow affinity captures reusable specialization}

\paragraph{Probe preferences persist across executions.}
The first test asks whether affinity captures a repeatable response signal. We select a probe using two executions per test query and evaluate it on the third, averaging all three folds. GPT-4o-mini reaches $0.4059$ against $0.3481$ for the training-selected fixed probe: $+0.0579$ (95\% CI $[0.0283,0.0881]$). Uniform treatment of tied probes retains a $0.0382$ gain ($[0.0100,0.0668]$). The improvement survives an execution not used for selection, supporting query-specific workflow preference within the panel. Haiku reaches $0.7237$ against $0.6894$, but uniform ties reduce its gain to $0.0015$; repeatability therefore depends on the backbone and tie rule. Table~\ref{tab:repeatability} gives both rules. Increasing the panel from eight to twelve probes also raises distinct training signatures from 281 to 329 and effective rank from 2.67 to 3.08, giving clustering a richer description of query behavior.

\paragraph{Affinity-defined specialists improve the global comparator.}
With measured test-affinity assignment $A$, Haiku utility rises from $D$'s $0.5381$ to $0.6097$: $+0.0717$ ($[0.0339,0.1122]$, Holm $p=0.0063$). On SWE, the bank resolves 49/200 instances against 35/200 for $D$: $+0.0700$ ($[0.0300,0.1100]$, $p=0.0052$), correcting 16 failures while introducing two. Measured-affinity assignment also exceeds semantic BGE and random partitions by $0.0650$ on SWE (Holm $p=0.0164/0.0123$). Its $24.5\%$ resolution rate is the highest point estimate among the tested SWE adaptations, including ScoreFlow at $21.0\%$ and FlowReasoner/FrugalGPT at $20.5\%$. These results show that workflow-response geometry provides an actionable basis for specialist assignment. Because $A$ uses test outcomes, the next experiment measures how much of this benefit transfers to text-only decisions.

The benefit varies across regimes: GPT-4o-mini's $A-D$ gap is $0.0019$, with no historical routing contrast surviving Holm correction. Outcome-assisted FlowReasoner leads the mixed-task columns, and Haiku's fixed self-consistency probe reaches $0.6894$. Thus the evidence supports useful specialization relative to global construction in two regimes, while retaining strong alternatives in the comparison.

\subsection{Text prediction connects the embedding to deployment}

The encoder must preserve decisions that matter to the expert bank. On the expanded panel, the frozen-BGE MLP obtains held-out cosine similarity $0.398$ and coordinate-averaged MSE $0.870$ with nested training on a 510/90 split. It replaces probe executions with one text forward pass. Downstream, $B$ reaches $0.2922/0.5214/0.2000$ in the three regimes of Table~\ref{tab:main-baselines}; none of its prespecified contrasts against $D$ is significant.

Assignment diagnostics explain why reconstruction metrics alone are insufficient. Measured/predicted cluster agreement is 37.7\% for GPT-4o-mini and 12.0\% for Haiku, whose predictor assigns 277/300 queries to one cluster. The SWE predictor chooses direct for all 200 instances. The measured-affinity gains therefore establish a useful target, but the current predictors do not consistently preserve its specialist decisions. This connects the empirical challenge to Proposition~\ref{prop:routing}, which assesses prediction error relative to cluster boundaries. Assignment fidelity and expert utility are therefore essential complements to coordinate error.

\subsection{Embedding-guided search generates effective workflows}

\begin{figure}[htbp]
\centering
\includegraphics[width=\linewidth]{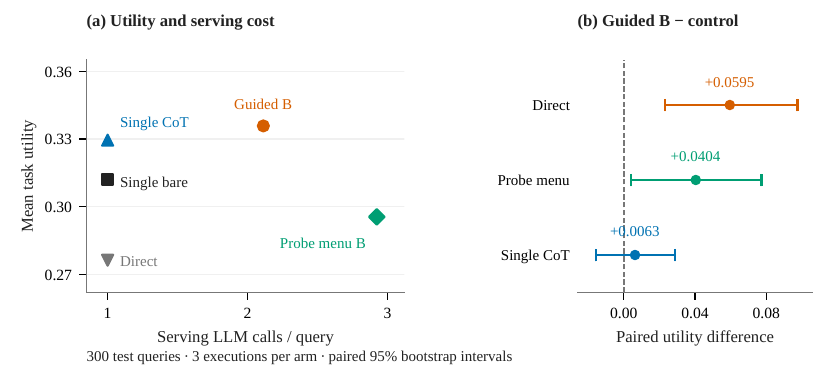}
\caption{Embedding-guided expert generation on 300 MixBench-H queries with three executions per arm. Left: utility versus serving calls/query. Right: Guided-$B$ utility differences with exploratory 95\% paired query-bootstrap intervals, resampling per-query execution means. Guided-$B$ improves on Direct and routed probes; its difference from single-call CoT is unresolved. Serving calls exclude offline measurement and construction. Prompting, decoding, and operator differences make the contrasts comparisons of complete configurations.}
\label{fig:controlled-results}
\end{figure}

We next evaluate embedding-guided search, which uses cluster preferences to initialize and edit executable workflows. This study uses GPT-4o-mini through the official OpenAI endpoint, separately from the preceding NEC-endpoint experiments. Each of six cluster searches evaluates an affinity-derived seed and at most two edits. The resulting bank includes refinement, sampling/selection, and decomposition. Five experts retain their seeds; cluster C3 accepts a larger ensemble followed by decomposition. The search traces show that behavioral initialization supplies most of the final bank, with local edits refining one specialist.

\paragraph{Generated workflows improve on Direct and routed probes.}
Figure~\ref{fig:controlled-results} compares the generated bank with fixed and routed workflows. On an independent 300-query test pool, Guided-$B$ scores $0.3358$ against Direct's $0.2763$: $+0.0595$ ($[0.0232,0.0975]$, exploratory $p=0.001$). It also improves over probe-menu-$B$'s $0.2955$ by $0.0404$ ($[0.0042,0.0773]$, $p=0.030$), while reducing calls/query from 2.924 to 2.113 (27.7\%). These comparisons show that the generated bank provides effective, reusable computation in a text-only system. Prompting, decoding, and execution differ across these arms, so the improvement is attributed to the complete configuration.

The same-executor single-call CoT control reaches $0.3295$; its $0.0063$ gap from Guided-$B$ remains unresolved ($[-0.0158,0.0289]$). A separate six-expert evaluation gives an observed per-query oracle of $0.4961$ versus $0.3567$ for the test-selected best fixed expert, while current routing realizes approximately $0.34$. The remaining opportunity is concrete: the bank contains complementary observed successes, and better text prediction is needed to recover them reliably. Appendix~\ref{app:experiments} reports expert semantics, costs, panel and operator ablations, label-free features, and transfer across regimes.

\section{Related work}
\label{sec:related}
\paragraph{Learning representations for computational decisions.}
Instance-dependent algorithm selection connects instance features to algorithm performance \citep{rice1976}; decision-relevant representation learning seeks to preserve distinctions that affect action quality \citep{zhang2021bisimulation}. Workflow-affinity embeddings specialize this perspective to query-level outcomes of executable reasoning procedures. They provide an external behavioral representation, whereas function vectors and task arithmetic represent tasks through internal activations and weight differences \citep{todd2024function,ilharco2023task}. General text embeddings remain useful features \citep{mteb2023,bge2024,qwen3embedding2025}; our supervision target additionally names the workflow behaviors that clustering and construction should exploit.

\paragraph{Allocating workflow construction effort.}
Search-based agent design learns executable programs or modular compositions from performance feedback \citep{adas2025,aflow2025,agentsquare2025}. Learned workflow generators further amortize design through preference optimization or execution-reward training \citep{scoreflow2025,flowreasoner2025}. RA-MoWE organizes construction around a reusable behavioral group. Its cluster determines the examples on which an expert is optimized, while the profile supplies a seed preference and failure-conditioned guidance. This interface accommodates different search backends and makes the grouping representation part of expert construction itself.

\paragraph{Selecting models and orchestrating agents.}
Preference-trained routing, cost-aware cascades, and joint selection of roles and models adapt computation to a query \citep{routellm2025,frugalgpt2024,masrouter2025}. Related approaches align internal expert routing with task neighborhoods, learn query-conditioned agentic architectures, or train coordination policies \citep{roma2026,maas2025,conductor2026}. RA-MoWE connects pre-execution selection to the creation of an external workflow bank through one outcome-coordinate space. Our experiments compare the implemented local adaptations under their disclosed information and resource contracts; the separate model-pool study examines the additional freedom to change backbones.

\section{Discussion and conclusion}
\label{sec:discussion}
Workflow affinity provides a common basis for learning query representations and constructing the workflows that serve them. The experiments show repeatable probe complementarity, gains from measured-affinity specialization in two regimes, and effective compact workflows initialized from cluster profiles. The encoder makes this construction reusable from query text; its assignment fidelity determines how much of the behavioral structure reaches deployment.

\label{maintext:end}
\bibliography{references}

@inproceedings{aflow2025,
  title={{AFlow}: Automating Agentic Workflow Generation},
  author={Zhang, Jiayi and Xiang, Jinyu and Yu, Zhaoyang and Teng, Fengwei and Chen, Xiong-Hui and Chen, Jiaqi and Zhuge, Mingchen and Cheng, Xin and Hong, Sirui and Wang, Jinlin and Zheng, Bingnan and Liu, Bang and Luo, Yuyu and Wu, Chenglin},
  booktitle={International Conference on Learning Representations}, year={2025},
  url={https://proceedings.iclr.cc/paper_files/paper/2025/hash/5492ecbce4439401798dcd2c90be94cd-Abstract-Conference.html}
}

@inproceedings{adas2025,
  title={Automated Design of Agentic Systems},
  author={Hu, Shengran and Lu, Cong and Clune, Jeff},
  booktitle={International Conference on Learning Representations}, year={2025},
  url={https://proceedings.iclr.cc/paper_files/paper/2025/hash/36b7acf6f6010652b3f2a433774a66fe-Abstract-Conference.html}
}

@inproceedings{agentsquare2025,
  title={{AgentSquare}: Automatic {LLM} Agent Search in Modular Design Space},
  author={Shang, Yu and Li, Yu and Zhao, Keyu and Ma, Likai and Liu, Jiahe and Xu, Fengli and Li, Yong},
  booktitle={International Conference on Learning Representations}, year={2025},
  url={https://proceedings.iclr.cc/paper_files/paper/2025/hash/0ae94013da7cd459402fd77874e09ee3-Abstract-Conference.html}
}

@inproceedings{todd2024function,
  title={Function Vectors in Large Language Models},
  author={Todd, Eric and Li, Millicent L. and Sharma, Arnab Sen and Mueller, Aaron and Wallace, Byron C. and Bau, David},
  booktitle={International Conference on Learning Representations}, year={2024},
  url={https://proceedings.iclr.cc/paper_files/paper/2024/hash/4ae163cb8788970e53b4fd9578141139-Abstract-Conference.html}
}

@inproceedings{ilharco2023task,
  title={Editing Models with Task Arithmetic},
  author={Ilharco, Gabriel and Ribeiro, Marco Tulio and Wortsman, Mitchell and Gururangan, Suchin and Schmidt, Ludwig and Hajishirzi, Hannaneh and Farhadi, Ali},
  booktitle={International Conference on Learning Representations}, year={2023},
  url={https://openreview.net/forum?id=6t0Kwf8-jrj}
}

@inproceedings{zhang2021bisimulation,
  title={Learning Invariant Representations for Reinforcement Learning without Reconstruction},
  author={Zhang, Amy and McAllister, Rowan and Calandra, Roberto and Gal, Yarin and Levine, Sergey},
  booktitle={International Conference on Learning Representations}, year={2021},
  url={https://openreview.net/forum?id=-2FCwDKRREu}
}

@inproceedings{routellm2025,
  title={{RouteLLM}: Learning to Route {LLM}s with Preference Data},
  author={Ong, Isaac and Almahairi, Amjad and Wu, Vincent and Chiang, Wei-Lin and Wu, Tianhao and Gonzalez, Joseph E. and Kadous, M. and Stoica, Ion},
  booktitle={International Conference on Learning Representations}, year={2025},
  url={https://proceedings.iclr.cc/paper_files/paper/2025/hash/5503a7c69d48a2f86fc00b3dc09de686-Abstract-Conference.html}
}

@inproceedings{roma2026,
  title={Routing Manifold Alignment Improves Generalization of Mixture-of-Experts {LLM}s},
  author={Li, Zhongyang and Li, Ziyue and Zhou, Tianyi},
  booktitle={International Conference on Learning Representations}, year={2026},
  url={https://proceedings.iclr.cc/paper_files/paper/2026/hash/249b8d8f41970822651435629e68a6e1-Abstract-Conference.html}
}

@inproceedings{maas2025,
  title={Multi-agent Architecture Search via Agentic Supernet},
  author={Zhang, Guibin and Niu, Luyang and Fang, Junfeng and Wang, Kun and Bai, Lei and Wang, Xiang},
  booktitle={Proceedings of the 42nd International Conference on Machine Learning},
  series={Proceedings of Machine Learning Research}, volume={267}, pages={75834--75852}, year={2025},
  url={https://proceedings.mlr.press/v267/zhang25bi.html}
}

@inproceedings{conductor2026,
  title={Learning to Orchestrate Agents in Natural Language with the Conductor},
  author={Nielsen, Stefan and Cetin, Edoardo and Schwendeman, Peter and Sun, Qi and Xu, Jinglue and Tang, Yujin},
  booktitle={International Conference on Learning Representations}, year={2026},
  url={https://openreview.net/forum?id=U23A2BUKYt}
}

@article{hoeffding1963,
  title={Probability Inequalities for Sums of Bounded Random Variables},
  author={Hoeffding, Wassily}, journal={Journal of the American Statistical Association},
  volume={58}, number={301}, pages={13--30}, year={1963},
  doi={10.1080/01621459.1963.10500830}
}

@incollection{rice1976,
  title={The Algorithm Selection Problem}, author={Rice, John R.},
  booktitle={Advances in Computers}, volume={15}, pages={65--118}, publisher={Elsevier}, year={1976},
  doi={10.1016/S0065-2458(08)60520-3}
}

@inproceedings{cot2022,
  title={Chain-of-Thought Prompting Elicits Reasoning in Large Language Models},
  author={Wei, Jason and Wang, Xuezhi and Schuurmans, Dale and Bosma, Maarten and Ichter, Brian and Xia, Fei and Chi, Ed H. and Le, Quoc V. and Zhou, Denny},
  booktitle={Advances in Neural Information Processing Systems}, volume={35}, year={2022},
  url={https://proceedings.neurips.cc/paper/2022/hash/9d5609613524ecf4f15af0f7b31abca4-Abstract-Conference.html}
}

@inproceedings{selfconsistency2023,
  title={Self-Consistency Improves Chain of Thought Reasoning in Language Models},
  author={Wang, Xuezhi and Wei, Jason and Schuurmans, Dale and Le, Quoc V. and Chi, Ed H. and Narang, Sharan and Chowdhery, Aakanksha and Zhou, Denny},
  booktitle={International Conference on Learning Representations}, year={2023},
  url={https://openreview.net/forum?id=1PL1NIMMrw}
}

@inproceedings{react2023,
  title={{ReAct}: Synergizing Reasoning and Acting in Language Models},
  author={Yao, Shunyu and Zhao, Jeffrey and Yu, Dian and Du, Nan and Shafran, Izhak and Narasimhan, Karthik and Cao, Yuan},
  booktitle={International Conference on Learning Representations}, year={2023},
  url={https://openreview.net/forum?id=WE_vluYUL-X}
}

@inproceedings{selfrefine2023,
  title={{Self-Refine}: Iterative Refinement with Self-Feedback},
  author={Madaan, Aman and Tandon, Niket and Gupta, Prakhar and Hallinan, Skyler and Gao, Luyu and Wiegreffe, Sarah and Alon, Uri and Dziri, Nouha and Prabhumoye, Shrimai and Yang, Yiming and Gupta, Shashank and Majumder, Bodhisattwa Prasad and Hermann, Katherine and Welleck, Sean and Yazdanbakhsh, Amir and Clark, Peter},
  booktitle={Advances in Neural Information Processing Systems}, volume={36}, year={2023},
  url={https://proceedings.neurips.cc/paper_files/paper/2023/hash/91edff07232fb1b55a505a9e9f6c0ff3-Abstract-Conference.html}
}

@inproceedings{tot2023,
  title={Tree of Thoughts: Deliberate Problem Solving with Large Language Models},
  author={Yao, Shunyu and Yu, Dian and Zhao, Jeffrey and Shafran, Izhak and Griffiths, Thomas L. and Cao, Yuan and Narasimhan, Karthik},
  booktitle={Advances in Neural Information Processing Systems}, volume={36}, year={2023},
  url={https://proceedings.neurips.cc/paper/2023/hash/271db9922b8d1f4dd7aaef84ed5ac703-Abstract.html}
}

@inproceedings{reflexion2023,
  title={Reflexion: Language Agents with Verbal Reinforcement Learning},
  author={Shinn, Noah and Cassano, Federico and Gopinath, Ashwin and Narasimhan, Karthik and Yao, Shunyu},
  booktitle={Advances in Neural Information Processing Systems}, volume={36}, year={2023},
  url={https://proceedings.neurips.cc/paper_files/paper/2023/hash/1b44b878bb782e6954cd888628510e90-Abstract-Conference.html}
}

@inproceedings{debate2024,
  title={Improving Factuality and Reasoning in Language Models through Multiagent Debate},
  author={Du, Yilun and Li, Shuang and Torralba, Antonio and Tenenbaum, Joshua B. and Mordatch, Igor},
  booktitle={Proceedings of the 41st International Conference on Machine Learning},
  series={Proceedings of Machine Learning Research}, volume={235}, pages={11733--11763}, year={2024},
  url={https://proceedings.mlr.press/v235/du24e.html}
}

@inproceedings{leasttomost2023,
  title={Least-to-Most Prompting Enables Complex Reasoning in Large Language Models},
  author={Zhou, Denny and Sch{\"a}rli, Nathanael and Hou, Le and Wei, Jason and Scales, Nathan and Wang, Xuezhi and Schuurmans, Dale and Cui, Claire and Bousquet, Olivier and Le, Quoc V. and Chi, Ed H.},
  booktitle={International Conference on Learning Representations}, year={2023},
  url={https://openreview.net/forum?id=WZH7099tgfM}
}

@inproceedings{bge2024,
  title={{C-Pack}: Packed Resources For General Chinese Embeddings},
  author={Xiao, Shitao and Liu, Zheng and Zhang, Peitian and Muennighoff, Niklas and Lian, Defu and Nie, Jian-Yun},
  booktitle={Proceedings of the 47th International ACM SIGIR Conference on Research and Development in Information Retrieval},
  pages={641--649}, year={2024}, doi={10.1145/3626772.3657878},
  url={https://sigir-2024.github.io/proceedings.html}
}

@misc{bge_modelcard,
  title={{BAAI/bge-large-en-v1.5}: Model Card},
  author={{Beijing Academy of Artificial Intelligence}}, year={2023},
  url={https://huggingface.co/BAAI/bge-large-en-v1.5}, note={Accessed September 16, 2026}
}

@inproceedings{mteb2023,
  title={{MTEB}: Massive Text Embedding Benchmark},
  author={Muennighoff, Niklas and Tazi, Nouamane and Magne, Lo{\"i}c and Reimers, Nils},
  booktitle={Proceedings of the 17th Conference of the European Chapter of the Association for Computational Linguistics},
  pages={2014--2037}, year={2023}, doi={10.18653/v1/2023.eacl-main.148},
  url={https://aclanthology.org/2023.eacl-main.148/}
}

@inproceedings{gpqa2024,
  title={{GPQA}: A Graduate-Level Google-Proof {Q\&A} Benchmark},
  author={Rein, David and Hou, Betty Li and Stickland, Asa Cooper and Petty, Jackson and Pang, Richard Yuanzhe and Dirani, Julien and Michael, Julian and Bowman, Samuel R.},
  booktitle={Conference on Language Modeling}, year={2024},
  url={https://openreview.net/forum?id=Ti67584b98}
}

@article{codecontests2022,
  title={Competition-Level Code Generation with {AlphaCode}},
  author={Li, Yujia and Choi, David and Chung, Junyoung and Kushman, Nate and Schrittwieser, Julian and Leblond, R{\'e}mi and Eccles, Tom and Keeling, James and Gimeno, Felix and Dal Lago, Agustin and Hubert, Thomas and Choy, Peter and de Masson d'Autume, Cyprien and Babuschkin, Igor and Chen, Xinyun and Huang, Po-Sen and Welbl, Johannes and Gowal, Sven and Cherepanov, Alexey and Molloy, James and Mankowitz, Daniel J. and Robson, Esme Sutherland and Kohli, Pushmeet and de Freitas, Nando and Kavukcuoglu, Koray and Vinyals, Oriol},
  journal={Science}, volume={378}, number={6624}, pages={1092--1097}, year={2022},
  doi={10.1126/science.abq1158}, url={https://arxiv.org/abs/2203.07814}
}

@inproceedings{livecodebench2024,
  title={{LiveCodeBench}: Holistic and Contamination Free Evaluation of Large Language Models for Code},
  author={Jain, Naman and Han, King and Gu, Alex and Li, Wen-Ding and Yan, Fanjia and Zhang, Tianjun and Wang, Sida and Solar-Lezama, Armando and Sen, Koushik and Stoica, Ion},
  booktitle={International Conference on Learning Representations}, year={2025},
  url={https://proceedings.iclr.cc/paper_files/paper/2025/hash/94074dd5a072d28ff75a76dabed43767-Abstract-Conference.html}
}

@misc{aime_source,
  title={American Invitational Mathematics Examination},
  author={{Mathematical Association of America}}, year={2026},
  url={https://maa.org/maa-invitational-competitions/}, note={Official competition resource; accessed September 16, 2026}
}

@inproceedings{swebench2024,
  title={{SWE-bench}: Can Language Models Resolve Real-World {GitHub} Issues?},
  author={Jimenez, Carlos E. and Yang, John and Wettig, Alexander and Yao, Shunyu and Pei, Kexin and Press, Ofir and Narasimhan, Karthik},
  booktitle={International Conference on Learning Representations}, year={2024},
  url={https://proceedings.iclr.cc/paper_files/paper/2024/hash/edac78c3e300629acfe6cbe9ca88fb84-Abstract-Conference.html}
}

@article{scoreflow2025,
  title={{ScoreFlow}: Mastering {LLM} Agent Workflows via Score-based Preference Optimization},
  author={Wang, Yinjie and Yang, Ling and Li, Guohao and Wang, Mengdi and Aragam, Bryon},
  journal={arXiv preprint arXiv:2502.04306}, year={2025},
  url={https://arxiv.org/abs/2502.04306}
}

@article{flowreasoner2025,
  title={{FlowReasoner}: Reinforcing Query-Level Meta-Agents},
  author={Gao, Hongcheng and Liu, Yue and He, Yufei and Dou, Longxu and Du, Chao and Deng, Zhijie and Hooi, Bryan and Lin, Min and Pang, Tianyu},
  journal={arXiv preprint arXiv:2504.15257}, year={2025},
  url={https://arxiv.org/abs/2504.15257}
}

@inproceedings{masrouter2025,
  title={{MasRouter}: Learning to Route {LLM}s for Multi-Agent Systems},
  author={Yue, Yanwei and Zhang, Guibin and Liu, Boyang and Wan, Guancheng and Wang, Kun and Cheng, Dawei and Qi, Yiyan},
  booktitle={Proceedings of the 63rd Annual Meeting of the Association for Computational Linguistics (Volume 1: Long Papers)},
  pages={15549--15572}, publisher={Association for Computational Linguistics}, year={2025},
  doi={10.18653/v1/2025.acl-long.757}, url={https://aclanthology.org/2025.acl-long.757/}
}

@article{frugalgpt2024,
  title={{FrugalGPT}: How to Use Large Language Models While Reducing Cost and Improving Performance},
  author={Chen, Lingjiao and Zaharia, Matei and Zou, James},
  journal={Transactions on Machine Learning Research}, year={2024},
  url={https://openreview.net/forum?id=cSimKw5p6R}
}

@article{agentless2025,
  title={Demystifying {LLM}-Based Software Engineering Agents},
  author={Xia, Chunqiu Steven and Deng, Yinlin and Dunn, Soren and Zhang, Lingming},
  journal={Proceedings of the ACM on Software Engineering}, volume={2}, number={FSE},
  articleno={FSE037}, numpages={24}, year={2025},
  doi={10.1145/3715754}, url={https://doi.org/10.1145/3715754}
}

@article{qwen3embedding2025,
  title={{Qwen3} Embedding: Advancing Text Embedding and Reranking Through Foundation Models},
  author={Zhang, Yanzhao and Li, Mingxin and Long, Dingkun and Zhang, Xin and Lin, Huan and Yang, Baosong and Xie, Pengjun and Yang, An and Liu, Dayiheng and Lin, Junyang and Huang, Fei and Zhou, Jingren},
  journal={arXiv preprint arXiv:2506.05176}, year={2025},
  url={https://arxiv.org/abs/2506.05176}
}
\bibliographystyle{iclr2027_conference}
\clearpage
\appendix
\section{Proofs and effective boundaries}
\label{app:proofs}

The analysis separates three questions: what a finite probe panel identifies,
when an estimated representation preserves a decision, and when a probe's
measured advantage survives conversion to an executable expert. The arguments
adapt standard concentration and decision-theoretic tools; they are not claims
of new concentration inequalities or a universal sufficiency theorem. Their
purpose is to identify assumptions that an implementation and an evaluation
must satisfy. This perspective follows the distinction between an instance
representation and the algorithm selected from it \citep{rice1976}. The analogy
to task-relevant representation learning \citep{zhang2021bisimulation} concerns
preserving decisions; our finite vector is not a bisimulation representation.

\subsection{Probability space and fixed-protocol convention}
\label{app:probability-space}

Let $Q\sim\mathcal D$ and fix an executable panel
$\mathcal P=\{p_1,\ldots,p_K\}$. A protocol specifies the model and endpoint,
prompts, available tools, answer extraction, score, execution limits, and failure
handling. Under this protocol, $Y_{k,s}(q)\in[0,1]$ is the score of execution $s$
of probe $k$, $f_k(q)=\mathbb E[Y_{k,s}(q)\mid q]$, and
$\bar f_k(q)=S_k^{-1}\sum_{s=1}^{S_k}Y_{k,s}(q)$.
The conditional expectation averages execution randomness, not changes to the
protocol. Conditioning on successful retries defines a different estimand from
assigning failed executions score zero.

Throughout the routing analysis, normalization statistics, centroids, the
encoder, and the deployed expert library are fixed, or we condition on the
training data and randomness that produced them. Evaluation queries are fresh.
Write $\Lambda=\operatorname{diag}(\sigma_1,\ldots,\sigma_K)$, where every
$\sigma_k>0$, and let
$z(q)=\Lambda^{-1}(f(q)-\mu)$ and
$\bar z(q)=\Lambda^{-1}(\bar f(q)-\mu)$.
Thus $z$ is the population embedding and $\bar z$ its measured estimate.
The encoder predicts $\hat z(q)=g_\theta(\psi(q))$; write
$x_i=\psi(q_i)$ for an individual example and $X=\psi(Q)$ for a fresh
random query. These objects remain distinct even when measured and predicted
embeddings lead to the same expert.
The $J$ centroids are pairwise distinct and use a fixed tie rule. If a query
lies on a nearest-centroid tie, its margin is zero. We write
$R_w(q)=\mathbb E[\operatorname{score}(w,q)\mid q]\in[0,1]$ for the
expected reward of an executable expert. Equality of selected experts implies
equality of these expected rewards, not of independent realized scores.

\subsection{Measurement accuracy and the information in a finite panel}
\label{app:measurement-proof}

\begin{proposition}[Affinity estimation preserves probe decisions]
\label{prop:measurement}
Fix a query and protocol. If each probe's $S$ repetitions are independent, identically distributed, and bounded in $[0,1]$, then
\begin{equation}
\Pr\!\left(\|\bar f(q)-f(q)\|_\infty>\epsilon\right)
\le 2K e^{-2S\epsilon^2}.
\label{eq:measurement}
\end{equation}
On the event $\|\bar f-f\|_\infty\le\epsilon$, selecting $\hat k\in\argmax_k\bar f_k(q)$ incurs expected probe regret at most $2\epsilon$.
\end{proposition}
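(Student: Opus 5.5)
The argument splits into a concentration step and a deterministic decision step.

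For the concentration bound, fix $q$ and the protocol, and condition on $q$. For each probe $k$, the scores $Y_{k,1}(q),\ldots,Y_{k,S}(q)$ are i.i.d., take values in $[0,1]$, and have conditional mean $f_k(q)$. Hoeffding's inequality \citep{hoeffding1963} gives the two-sided bound
\begin{equation*}
\Pr\bigl(|\bar f_k(q)-f_k(q)|>\epsilon\bigr)\le 2e^{-2S\epsilon^2}.
\end{equation*}
The event $\|\bar f(q)-f(q)\|_\infty>\epsilon$ is the union over $k=1,\ldots,K$ of the coordinate events. A union bound therefore yields \eqref{eq:measurement}. This step needs no independence across probes, which matters because different probes may share executions or randomness. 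If the repetition counts $S_k$ differ across probes, as in Appendix~\ref{app:probability-space}, the same argument gives $\sum_k 2e^{-2S_k\epsilon^2}$, and this reduces to the stated bound when every $S_k=S$.

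For the regret claim, work on the event $E=\{\|\bar f-f\|_\infty\le\epsilon\}$. Let $k^\star\in\argmax_k f_k(q)$ and $\hat k\in\argmax_k\bar f_k(q)$, with ties broken by any rule. Define the expected probe regret as $f_{k^\star}(q)-f_{\hat k}(q)$. Decompose it as
\begin{equation*}
f_{k^\star}-f_{\hat k}=(f_{k^\star}-\bar f_{k^\star})+(\bar f_{k^\star}-\bar f_{\hat k})+(\bar f_{\hat k}-f_{\hat k}).
\end{equation*}
On $E$, the first and third terms are each at most $\epsilon$. The middle term is at most $0$ because $\hat k$ maximizes $\bar f$. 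Hence the regret is at most $2\epsilon$. Here $\hat k$ is random, since it depends on the measurements, while $f$ is a deterministic function of $q$. The bound therefore holds pointwise on $E$ without any expectation over $\hat k$.

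The main obstacle is not technical but a matter of stating the estimand carefully. "Expected probe regret" must mean the gap in conditional expected scores $f$, taken within the fixed panel and protocol. It does not mean realized scores, and it does not mean standardized embeddings. In particular, the argmax of $\bar z$ and of $\bar f$ can differ, because the scales $\sigma_k$ differ across coordinates. If the selection were made from $\bar z$ instead, I would either restrict the claim to selection from $\bar f$, as the statement does, or note that the two selections coincide only when $\Lambda$ is a multiple of the identity.

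I would also note the inversion: $S\ge \log(2K/\delta)/(2\epsilon^2)$ guarantees regret at most $2\epsilon$ with probability at least $1-\delta$. This is the form used later in the composed analysis.
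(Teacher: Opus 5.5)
Your proposal is correct and follows the paper's proof essentially line for line: a per-coordinate Hoeffding bound plus a union bound (with the same remarks that no cross-probe independence is needed and that unequal repeat counts give $2\sum_k e^{-2S_k\epsilon^2}$), then the same three-term decomposition yielding $\epsilon+0+\epsilon$. The paper additionally observes that replacing $\bar f(q)$ by $f(q')$ in this argument bounds by $2\epsilon$ the loss from transferring a best probe between two queries with $\|f(q)-f(q')\|_\infty\le\epsilon$, which your decomposition gives immediately.
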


\begin{proof}[Proof of Proposition~\ref{prop:measurement}]
For each fixed $q$ and $k$, independent bounded executions give
\begin{equation}
 \Pr\bigl(|\bar f_k(q)-f_k(q)|>\epsilon\mid q\bigr)
 \le 2\exp(-2S\epsilon^2)
\end{equation}
by Hoeffding's inequality \citep{hoeffding1963}. A union bound over $K$
coordinates proves the stated $2K\exp(-2S\epsilon^2)$ bound. Independence
between different probes is not required. Let $k^*\in\arg\max_k f_k(q)$ and
$\hat k\in\arg\max_k\bar f_k(q)$. On the uniform accuracy event,
\begin{align}
 f_{k^*}(q)-f_{\hat k}(q)
 &= f_{k^*}(q)-\bar f_{k^*}(q)
   +\bar f_{k^*}(q)-\bar f_{\hat k}(q)
   +\bar f_{\hat k}(q)-f_{\hat k}(q)\\
 &\le \epsilon+0+\epsilon.
\end{align}
Thus a unique best probe with population gap exceeding $2\epsilon$ is recovered.
For transfer between two queries, replace $\bar f(q)$ in this argument by
$f(q')$. If $\|f(q)-f(q')\|_\infty\le\epsilon$, using a best probe for
$q'$ on $q$ loses at most $2\epsilon$ relative to the best probe for $q$.
\end{proof}

The same argument connects standardized distance to probe transfer. Let
$\sigma_{\max}=\max_k\sigma_k$. Since
$f(q)-f(q')=\Lambda(z(q)-z(q'))$, the condition
$\|z(q)-z(q')\|_2\le\delta$ implies
$\|f(q)-f(q')\|_\infty\le\sigma_{\max}\delta$.
Thus the expected score lost by transferring the best probe is at most
$2\sigma_{\max}\delta$. The scale factor is necessary because distances in
standardized coordinates are not raw score differences.

For unequal repeat counts the failure bound is
$2\sum_k\exp(-2S_k\epsilon^2)$; for $N$ prespecified queries with equal
repeat counts, a further union bound replaces $K$ by $NK$. In particular,
$\epsilon_S(\delta)=\sqrt{\log(2K/\delta)/(2S)}$ is a sufficient
single-query radius at confidence $1-\delta$. For $K=12$, $S=3$, and
$\delta=0.05$, this radius is approximately $1.014$, hence it does not certify
accurate rankings for scores in $[0,1]$. The result explains the role of
repetitions but is not a nonvacuous numerical certificate for our three-repeat
fingerprints.

\paragraph{Standardization changes both noise and the decision geometry.}
For fixed train-fitted $\Lambda$,
\begin{equation}
 \|\bar z-z\|_2
 \le \|\bar f-f\|_\infty
       \left(\sum_{k=1}^K\sigma_k^{-2}\right)^{1/2}.
 \label{eq:normalization-noise}
\end{equation}
Small coordinate scales can therefore amplify execution noise. Positive
diagonal scaling retains the information in the raw vector, but changes
Euclidean neighborhoods and cross-coordinate rankings. Our code substitutes
one for nearly zero fitted standard deviations; it does not otherwise cap
this amplification. Equation~\eqref{eq:normalization-noise} conditions on
normalization fitted without the evaluated query. It is not a joint guarantee
for a training observation reused to fit its own scale, and it does not cover
nonlinear per-query unit normalization.

The finite-panel guarantee has a precise information boundary. The vector
$f(q)$ determines the expected reward of each existing probe and of any
randomized choice among them: conditional on fixed mixture weights
$a\in\Delta_K$, that reward is $a^\top f(q)$. Consequently,
$\|f(q)-f(q')\|_\infty$ is a task-dependent pseudometric for these decisions.
It need not distinguish queries requiring different unseen compositions.

\begin{proposition}[Finite-panel indistinguishability]
\label{prop:finite-panel-limit}
There exist a probe panel, a query distribution, and a two-expert bank such
that the queries have identical population fingerprints, but every selector
from that bank using only the fingerprint has positive regret, even with
infinitely many probe executions.
\end{proposition}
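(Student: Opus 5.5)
The proof is an explicit construction. Since the statement is existential, it suffices to build one panel, one distribution and one bank in which the fingerprint carries no information about which expert is better. Take $K=1$ with a single probe $p_1$. Let $\mathcal D$ put mass $1/2$ on each of two queries $q_a$ and $q_b$. Under the fixed protocol, the probe's expected score is the same on both, say $f_1(q_a)=f_1(q_b)=1/2$, so $f(q_a)=f(q_b)$; the construction is unaffected if the probe instead fails deterministically on both. Let the bank be $\{w_1,w_2\}$ with expected rewards
\[
R_{w_1}(q_a)=1,\quad R_{w_1}(q_b)=0,\qquad R_{w_2}(q_a)=0,\quad R_{w_2}(q_b)=1.
\]
To make this concrete, let $q_a$ and $q_b$ be tasks whose correct solutions need different compositions, such as repeated sampling with selection versus decomposition followed by revision. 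The probe succeeds on each with probability $1/2$. Each expert implements exactly one of the two compositions and is deterministic. These are abstract reward specifications that any protocol can realize, because the statement asks only for existence.

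Next I formalize the selector. A fingerprint-only selector is a possibly randomized map $\pi$ from $\mathbb R^K$ to distributions over $\{1,2\}$, applied to $f(Q)$. Because $f(q_a)=f(q_b)=:v$, the selector picks $w_1$ with the same probability $\alpha=\pi(v)_1$ on both queries. Its expected reward is therefore
\[
\tfrac12\bigl[\alpha\cdot 1+(1-\alpha)\cdot 0\bigr]+\tfrac12\bigl[\alpha\cdot 0+(1-\alpha)\cdot 1\bigr]=\tfrac12 .
\]
The oracle that picks the better expert for each query earns $1$, so every such selector has regret $1/2>0$. The same holds if regret is measured per query: the selector's worse query loses at least $1/2$.

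The last step handles the clause about infinitely many probe executions. By Proposition~\ref{prop:measurement}, as $S\to\infty$ the estimate $\bar f$ converges to $f$ almost surely. More directly, the law of $\bar f(q)$ given $q$ depends on the query only through the distribution of $Y_{1,s}(q)$. We choose this distribution to be identical on both queries, for example Bernoulli$(1/2)$ draws, so even the full sequence of executions is identically distributed on $q_a$ and $q_b$. Any selector using measured fingerprints, at any $S$ including the limit, is then again a randomization that does not depend on the identity of the query, and the calculation above gives regret $1/2$.

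The main subtlety is this last point rather than the arithmetic. We must make sure that "using only the fingerprint" cannot secretly depend on the query through the distribution of realized scores. Choosing identical score laws, rather than merely identical means, closes that loophole. I would also remark that the example matches the information boundary discussed before the statement: $f$ determines every reward of the form $a^\top f$, but not the rewards of unseen compositions.
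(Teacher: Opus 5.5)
Your construction is correct and essentially identical to the paper's. The paper uses two equiprobable queries with $f_k=1/2$ for every probe in an arbitrary panel (you take $K=1$), a two-expert bank with swapped $0/1$ rewards, and the observation that any randomized fingerprint-only selector earns $1/2$ against the oracle's $1$. Your extra step of making the full per-execution score laws identical, so that measured fingerprints at any $S$ carry no query information, is a harmless strengthening of the paper's remark that perfect estimation of $f$ does not alter the example.
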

\begin{proof}
Let $Q$ be uniform on $\{q_0,q_1\}$ and set
$f_k(q_0)=f_k(q_1)=1/2$ for every $k$. Let the selectable expert bank consist
of $w_0$ and $w_1$, with rewards
$R_{w_0}(q_0)=R_{w_1}(q_1)=1$ and
$R_{w_0}(q_1)=R_{w_1}(q_0)=0$. A selector observing only $f(Q)$ sees the
same input for both queries. If it chooses $w_0$ with probability $a$, its
mean reward is $(a+(1-a))/2=1/2$, whereas a query-aware oracle achieves one.
The regret is $1/2$ for every $a$. Standardization and perfect estimation of
$f$ do not alter this example.
\end{proof}

An extension to a larger workflow class therefore needs an extra assumption.
For example, suppose $|R_w(q)-h_w(z(q))|\le\eta$ for all $q,w$, every
$h_w$ is $L$-Lipschitz, and a selector is within $\kappa$ of maximizing
$h_w(\tilde z)$. Adding and subtracting the two surrogate rewards gives
\begin{equation}
 \sup_w R_w(q)-R_{\tilde w}(q)
 \le 2\eta+2L\|\tilde z-z(q)\|_2+\kappa.
 \label{eq:conditional-sufficiency}
\end{equation}
The substantive assumption is the factorization of rewards through $z$;
neither finite-panel concentration nor good probe prediction establishes it.
We do not assume this factorization in our main results.

\subsection{Route stability and the information available to an encoder}
\label{app:routing-proof}

\begin{proposition}[A margin condition for faithful assignment]
\label{prop:routing}
Condition on fixed distinct centroids and a fixed encoder. For a uniquely assigned point $z$, let $a=r_C(z)$ and
\begin{equation}
m_C(z)=\min_{b\ne a}
\frac{\|z-c_b\|_2^2-\|z-c_a\|_2^2}{2\|c_b-c_a\|_2}.
\label{eq:margin}
\end{equation}
Set $m_C(z)=0$ on ties. If $\|\hat z-z\|_2<m_C(z)$, the assignments agree. With $\mathcal E=\E\|g_\theta(X)-z(Q)\|_2^2$, for every $t>0$,
\begin{equation}
\Pr\{r_C(g_\theta(X))\ne r_C(z(Q))\}
\le\Pr\{m_C(z(Q))\le t\}+\mathcal E/t^2.
\label{eq:routing}
\end{equation}
\end{proposition}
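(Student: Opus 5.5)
The plan has two parts: a deterministic stability statement for nearest-centroid assignment, and a probabilistic bound obtained from it by splitting on the margin. Both rest on the geometry of Voronoi cells. For distinct centroids $c_a,c_b$, the comparison $\|y-c_b\|_2^2-\|y-c_a\|_2^2$ is affine in $y$:
\begin{equation*}
\|y-c_b\|_2^2-\|y-c_a\|_2^2 = 2\langle y,\,c_a-c_b\rangle+\|c_b\|_2^2-\|c_a\|_2^2 .
\end{equation*}
Dividing by $2\|c_b-c_a\|_2$ therefore gives the signed Euclidean distance from $y$ to the bisecting hyperplane $H_{ab}$ of $c_a$ and $c_b$. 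It is positive on the side of $c_a$, and it is $1$-Lipschitz in $y$, because the gradient $2(c_a-c_b)/(2\|c_b-c_a\|_2)$ has unit norm. Hence $m_C(z)$ is the minimum over $b\ne a$ of these signed distances, which equals the distance from $z$ to the boundary of the cell of $a$.

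\emph{Deterministic step.} Suppose $\|\hat z-z\|_2<m_C(z)$. Note this forces $m_C(z)>0$, so $z$ is not on a tie. For each $b\ne a$, the Lipschitz property gives
\begin{equation*}
\frac{\|\hat z-c_b\|_2^2-\|\hat z-c_a\|_2^2}{2\|c_b-c_a\|_2}\ \ge\ m_C(z)-\|\hat z-z\|_2\ >\ 0 .
\end{equation*}
So $c_a$ is strictly closest to $\hat z$, and $r_C(\hat z)=a$ with no tie-rule ambiguity. This settles the first claim. It is also where the convention $m_C=0$ on ties matters, since it makes the hypothesis unsatisfiable in the ambiguous case.

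\emph{Probabilistic step.} Set $\hat z=g_\theta(X)$ and $z=z(Q)$; the centroids and encoder are fixed, so only $Q$ is random. By the contrapositive of the deterministic step, a disagreement $r_C(\hat z)\ne r_C(z)$ implies $\|\hat z-z\|_2\ge m_C(z)$. For any $t>0$, split on whether $m_C(z)\le t$ or $m_C(z)>t$. In the second case, disagreement forces $\|\hat z-z\|_2>t$. Thus the disagreement event is contained in $\{m_C(z(Q))\le t\}\cup\{\|g_\theta(X)-z(Q)\|_2>t\}$. A union bound, followed by Markov's inequality applied to $\|g_\theta(X)-z(Q)\|_2^2$, bounds the second probability by $\mathcal E/t^2$ and yields \eqref{eq:routing}.

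\emph{Remaining checks.} The main obstacle is the geometric identification of $m_C$ as a $1$-Lipschitz signed distance, together with the care needed at ties and boundary points. The rest is measure-theoretic bookkeeping. One needs $m_C(z(Q))$ to be measurable; it is a minimum of continuous functions of $z$, modified on the measurable tie set. If $\mathcal E=\infty$, the bound is trivial.
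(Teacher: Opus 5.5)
Your proposal is correct and follows the paper's proof. Your unit-gradient (1-Lipschitz) bound on each normalized comparison is the paper's expansion $\|\tilde z-c_b\|_2^2-\|\tilde z-c_a\|_2^2=\|z-c_b\|_2^2-\|z-c_a\|_2^2+2e^\top(c_a-c_b)$ with Cauchy--Schwarz applied to the cross term, and your probabilistic step is the same union bound over $\{m_C(Z)\le t\}$ and the large-error event followed by Markov's inequality.
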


\begin{proof}[Proof of Proposition~\ref{prop:routing}]
Set $a=r_C(z)$ and write $\tilde z=z+e$. For every competing centroid $b$,
\begin{align}
 \|\tilde z-c_b\|_2^2-\|\tilde z-c_a\|_2^2
 &=\|z-c_b\|_2^2-\|z-c_a\|_2^2+2e^\top(c_a-c_b)\\
 &\ge 2\|c_b-c_a\|_2\bigl(m_C(z)-\|e\|_2\bigr).
\end{align}
If $\|e\|_2<m_C(z)$, all comparisons are strictly positive, so the
nearest centroid is unchanged. With $Z=z(Q)$, $X=\psi(Q)$, and
$\mathcal E=\mathbb E\|g_\theta(X)-Z\|_2^2$, a disagreement can occur
only if $m_C(Z)\le t$ or $\|g_\theta(X)-Z\|_2\ge t$. The union bound
and Markov's inequality give
\begin{equation}
 \Pr\{r_C(g_\theta(X))\ne r_C(Z)\}
 \le \Pr\{m_C(Z)\le t\}+\mathcal E/t^2.
 \label{eq:routing-probability}
\end{equation}
Finally, two selected experts have the same expected reward when their
indices agree and a reward difference of magnitude at most one otherwise.
Taking expectations proves the reward-difference bound.
\end{proof}

The simpler nearest-versus-second-nearest distance gap
$\Delta_C(z)$ gives the sufficient condition
$2\|\tilde z-z\|_2<\Delta_C(z)$ by the triangle inequality. The exact
Voronoi margin can be less conservative. Neither condition states that the
centroids recover optimal workflow classes or that a preserved route selects
a high-quality expert. In particular, the measured-fingerprint route is not
an upper bound on the predicted route's reward. When multiple indices execute
the same specification, index disagreement also overstates disagreement
between executable behaviors.

\paragraph{Measured and population routes differ.}
Writing $\bar Z=\bar z(Q)$, both sources of error matter when comparing
the measured route with the encoder:
\begin{align}
 \Pr\{r_C(\bar Z)\ne r_C(g_\theta(X))\}
 &\le\Pr\{m_C(Z)\le t\}
   +\Pr\{\|\bar Z-Z\|_2\ge t\}\\
 &\quad+\Pr\{\|g_\theta(X)-Z\|_2\ge t\}.
 \label{eq:measured-route}
\end{align}
Alternatively, $\|g_\theta(X)-\bar Z\|_2<m_C(\bar Z)$ certifies
agreement with the measured route for that observation. This is a
deterministic diagnostic, not a certificate of population routing accuracy.

\begin{proposition}[Prediction noise and conditional information]
\label{prop:prediction-floor}
Assume finite second moments, fixed affine normalization, and
$\bar Z=Z+\xi$ with $\mathbb E[\xi\mid Q]=0$. For any fixed predictor
$g$ and $g^*(X)=\mathbb E[Z\mid X]$,
\begin{align}
 \mathbb E\|g(X)-\bar Z\|_2^2
 &=\mathbb E\|g(X)-g^*(X)\|_2^2
   +\mathbb E\operatorname{tr}\operatorname{Cov}(Z\mid X)
   +\mathbb E\|\xi\|_2^2. \label{eq:prediction-decomposition}
\end{align}
For the population route $L=r_C(Z)$, any selector $\hat L$ using only $X$
satisfies
\begin{equation}
 \Pr(\hat L\ne L)
 \ge \mathbb E\left[1-\max_j\Pr(L=j\mid X)\right].
 \label{eq:route-information-floor}
\end{equation}
\end{proposition}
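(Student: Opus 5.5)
The plan is to prove the two displays separately, both by conditioning on the text features $X$. For \eqref{eq:prediction-decomposition}, write
\[
g(X)-\bar Z=\bigl(g(X)-g^*(X)\bigr)+\bigl(g^*(X)-Z\bigr)-\xi ,
\]
expand the squared norm, and show that the three cross terms have zero expectation. The diagonal terms then give the three summands. In particular, $\mathbb E\|g^*(X)-Z\|_2^2=\mathbb E\operatorname{tr}\operatorname{Cov}(Z\mid X)$ by definition of the conditional covariance. Finite second moments make every term integrable, and fixed affine normalization ensures that $Z$, $\bar Z$ and $\xi$ are affine images of $f,\bar f$ with a fixed $\Lambda$ and $\mu$. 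This is what keeps $\mathbb E[\xi\mid Q]=0$ meaningful in standardized coordinates.

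The cross terms are handled by the tower property.

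\textbf{First term.} $\mathbb E[(g-g^*)^\top(g^*-Z)]=0$, since $g(X)-g^*(X)$ is $X$-measurable and $\mathbb E[g^*(X)-Z\mid X]=0$.

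\textbf{Second term.} $\mathbb E[(g-g^*)^\top\xi]=0$. Because $X=\psi(Q)$ is $Q$-measurable, $\mathbb E[\xi\mid X]=\mathbb E[\mathbb E[\xi\mid Q]\mid X]=0$.

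\textbf{Third term.} $\mathbb E[(g^*(X)-Z)^\top\xi]=0$. Here we condition on $Q$: both $g^*(X)$ and $Z=z(Q)$ are $Q$-measurable, and $\mathbb E[\xi\mid Q]=0$.

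This third term is the step that needs care. It requires $X$ to be a function of $Q$, which holds because $\psi$ is frozen and deterministic. It also requires that the measurement noise carry no mean information beyond $Q$. If $\psi$ were randomized, one would instead condition on $(Q,X)$ and need $\mathbb E[\xi\mid Q,X]=0$, so I would state this explicitly.

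For \eqref{eq:route-information-floor}, I would use the standard Bayes-error argument. A selector $\hat L$ using only $X$ is a (possibly randomized, with randomness independent of $L$ given $X$) function of $X$. Then
\[
\Pr(\hat L=L\mid X)=\sum_j\Pr(\hat L=j\mid X)\Pr(L=j\mid X)\le\max_j\Pr(L=j\mid X),
\]
because the weights $\Pr(\hat L=j\mid X)$ form a probability vector. Taking complements and expectations gives the bound. The label $L=r_C(Z)$ is well defined by the fixed centroids and tie rule.

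The only subtlety in this part is making the conditional-independence assumption on the selector's internal randomness explicit.
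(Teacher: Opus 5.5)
Your proposal is correct and follows the paper's proof. It uses the same split $g(X)-\bar Z=(g(X)-g^*(X))+(g^*(X)-Z)-\xi$, removes the cross terms using conditional-mean orthogonality together with $\mathbb E[\xi\mid Q]=0$ (since $X$ and $Z$ are functions of $Q$), and gives the same Bayes-error bound by treating a randomized selector as a convex combination of the $\Pr(L=j\mid X)$ before averaging over $X$. Your explicit assumptions, that $\psi$ is deterministic and that the selector's internal randomness is independent of $L$ given $X$, state what the paper's proof leaves implicit.
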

\begin{proof}
Expand $g(X)-\bar Z=(g(X)-g^*(X))+(g^*(X)-Z)-\xi$.
The first two summands are orthogonal in expectation by conditional-mean
orthogonality. The noise is orthogonal to both because $X$ and $Z$ are
functions of $Q$ and $\mathbb E[\xi\mid Q]=0$. The middle squared norm
equals the expected trace of the conditional covariance, proving
\eqref{eq:prediction-decomposition}. Conditional on $X=x$, any deterministic
prediction succeeds with probability at most $\max_j\Pr(L=j\mid X=x)$.
A randomized prediction is a convex combination of these success
probabilities and obeys the same bound. Average over $X$.
\end{proof}

For independent repeats within each coordinate,
\begin{equation}
 \mathbb E[\|\xi\|_2^2\mid Q=q]
 =\sum_{k=1}^K\frac{\operatorname{Var}(Y_k(q)\mid q)}{S_k\sigma_k^2}
 \le\sum_{k=1}^K\frac{1}{4S_k\sigma_k^2}.
 \label{eq:noise-variance}
\end{equation}
Cross-coordinate dependence does not affect this sum of squared coordinate
errors. More executions can reduce measurement noise, but cannot remove the
conditional-information term for a fixed $\psi$. That term can nevertheless
be zero; a frozen representation does not by itself establish a positive
information floor. Our mixed cosine/MSE training objective is not pure
squared-loss minimization, so conditional-mean optimality does not claim that
the trained network attains $g^*$. Reported coordinate-averaged MSE must be
multiplied by $K$ to match the vector-error convention above.

\subsection{From measured probe preference to an executable seed}
\label{app:seed-proof}

\begin{proposition}[Utility of an affinity-derived initialization]
\label{prop:seed}
Fix a cluster. Let $r_k$ be probe $k$'s expected raw score and $s_k$ its estimate, with $\max_k|s_k-r_k|\le\epsilon$. Let $\mathcal R\subseteq\{1,\ldots,K\}$ be the nonempty set of runnable probe indices and $\hat k\in\mathcal R$ the chosen probe. Define
$\alpha=\max_k r_k-\max_{k\in\mathcal R}r_k$ and
$\beta=\max_{k\in\mathcal R}s_k-s_{\hat k}$.
If its mapped seed $v_{\hat k}$ has expected cluster utility $U(v_{\hat k})\ge r_{\hat k}-\tau$, then
\begin{equation}
U(v_{\hat k})\ge\max_k r_k-\alpha-\beta-2\epsilon-\tau.
\label{eq:seed}
\end{equation}
\end{proposition}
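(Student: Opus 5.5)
The plan is a telescoping chain. We start from the seed-fidelity assumption $U(v_{\hat k})\ge r_{\hat k}-\tau$, then lower-bound $r_{\hat k}$ in terms of $\max_k r_k$ by passing through the estimates $s_k$ on the runnable set $\mathcal R$. The argument mirrors the regret step in the proof of Proposition~\ref{prop:measurement}. The only differences are that the maximization is restricted to $\mathcal R$ and that $\hat k$ is allowed to be a $\beta$-suboptimal maximizer of the estimates, rather than an exact argmax.

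The steps, in order, are as follows. First, fix any $k^\dagger\in\argmax_{k\in\mathcal R}r_k$; this exists because $\mathcal R$ is finite and nonempty. By the definition of $\alpha$, $r_{k^\dagger}=\max_k r_k-\alpha$. Second, decompose
\begin{equation*}
r_{k^\dagger}-r_{\hat k}=(r_{k^\dagger}-s_{k^\dagger})+(s_{k^\dagger}-s_{\hat k})+(s_{\hat k}-r_{\hat k}).
\end{equation*}
The first and third terms are each at most $\epsilon$ by the uniform accuracy hypothesis $\max_k|s_k-r_k|\le\epsilon$. The middle term is at most $\max_{k\in\mathcal R}s_k-s_{\hat k}=\beta$, since $k^\dagger\in\mathcal R$. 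This gives $r_{\hat k}\ge\max_k r_k-\alpha-\beta-2\epsilon$. Third, combine this with $U(v_{\hat k})\ge r_{\hat k}-\tau$ to obtain \eqref{eq:seed}.

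There is no real analytic obstacle. The only point needing care is the middle term: it must be bounded using the estimated maximum over $\mathcal R$, not over all $K$ probes, because $k^\dagger$ is chosen within $\mathcal R$. This is exactly why $\alpha$ (the runnability loss) and $\beta$ (the selection slack) enter additively and separately. I would also remark after the proof that $\beta=0$ when $\hat k$ is the highest-ranked runnable probe by estimate, which is the rule described in Section~\ref{sec:generation}. Likewise, $\epsilon$ can be instantiated on a high-probability event through the Hoeffding bound \eqref{eq:measurement}, applied to cluster-averaged raw scores. Finally, $\tau$ is an assumption about the operator mapping rather than something the proof establishes.
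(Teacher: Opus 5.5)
Your proof is correct and follows the paper's argument. The paper writes the same three-term bound as one chain, $\max_k r_k=\alpha+\max_{k\in\mathcal R}r_k\le\alpha+\max_{k\in\mathcal R}s_k+\epsilon=\alpha+s_{\hat k}+\beta+\epsilon\le\cdots$, with your witness $k^\dagger$ left implicit.

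Two of your closing remarks are wrong as stated. First, the implemented seed ranks runnable probes by the standardized prototype $\rho_j$, not by the raw estimates $s_k$, so $\beta$ need not vanish even under perfect measurement: the paper's example with raw means $(0.9,0.6)$ gives $\beta=0.3$. Second, a high-probability $\epsilon$ for cluster averages needs a query-sampling term in addition to the per-query Hoeffding bound \eqref{eq:measurement}, and that bound does not directly apply when the clusters were fitted to the same noisy fingerprints.
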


\begin{proof}[Proof of Proposition~\ref{prop:seed}]
Suppress the cluster index. Let $\hat k\in\mathcal R$ be the selected
runnable probe and $v_{\hat k}$ its executable seed. On the event
$\max_k|s_k-r_k|\le\epsilon$, use the definitions
$\alpha=\max_k r_k-\max_{k\in\mathcal R}r_k$ and
$\beta=\max_{k\in\mathcal R}s_k-s_{\hat k}$ to obtain
\begin{align}
 \max_k r_k
 &=\alpha+\max_{k\in\mathcal R}r_k\\
 &\le\alpha+\max_{k\in\mathcal R}s_k+\epsilon
 =\alpha+s_{\hat k}+\beta+\epsilon\\
 &\le\alpha+r_{\hat k}+\beta+2\epsilon
 \le\alpha+U(v_{\hat k})+\beta+2\epsilon+\tau.
\end{align}
Rearranging proves the claim. Only the one-sided transfer condition
$U(v_{\hat k})\ge r_{\hat k}-\tau$ for the chosen seed is needed.
\end{proof}

This form allows the implemented seed to maximize standardized scores rather
than raw utility: $\hat k\in\arg\max_{k\in\mathcal R}
(s_k-\mu_k)/\sigma_k$. The raw-score deficit $\beta$ is therefore
essential. Even perfect measurement need not make it zero. For example,
raw cluster means $(0.9,0.6)$, fitted global means $(0.89,0.3)$, and scales
$(0.1,0.1)$ yield standardized means $(0.1,3)$, selecting the second probe
and incurring a raw deficit $0.3$. This is a deliberate distinction between
relative cluster preference and absolute reward; a standardized seed rule
needs empirical justification at this interface.

Likewise, $\tau$ cannot be omitted because probe names do not specify
executable equivalence. The current mapping merges several probe families,
omits tool-use and ReAct executors, and adds a chain-of-thought instruction to
every guided seed. The proposition does not assume these transformations
preserve reward. The coverage term $\alpha$ measures unavailable panel
capabilities; it does not measure the gap to all possible workflows.

\paragraph{Sampling a cluster profile.}
Suppose a fixed cluster has $n$ fresh queries sampled independently from its
conditional query distribution, each with $S$
independent executions per probe, and let
$s_k=n^{-1}\sum_i\bar f_k(Q_i)$ and
$r_k=\mathbb E[f_k(Q)\mid Q\text{ belongs to the cluster}]$.
Separating query sampling from execution noise gives
\begin{equation}
 \Pr\{\max_k|s_k-r_k|>a+b\}
 \le 2K e^{-2na^2}+2K e^{-2nSb^2}.
 \label{eq:profile-concentration}
\end{equation}
The first term applies Hoeffding to $f_k(Q_i)$; the second applies it
conditionally on the queries to the $nS$ centered execution scores, assuming
conditional independence across these executions. More repeated executions
do not eliminate finite-query uncertainty. This argument does not directly
apply when cluster membership is selected using those same noisy executions,
as in clustering measured training fingerprints.

\subsection{What validation and operator priors can guarantee}
\label{app:selection}

\begin{lemma}[Seed retention under accurate selection]
\label{lem:seed-retention}
Let a candidate set $\mathcal V$ contain seed $v$, with utilities $U(w)$
and recorded estimates $\widehat U(w)$. Suppose
$\sup_{w\in\mathcal V}|\widehat U(w)-U(w)|\le\eta$ and
$\widehat U(w_{\mathrm{out}})\ge
\max_{w\in\mathcal V}\widehat U(w)-h$. Then
\begin{equation}
 U(w_{\mathrm{out}})\ge\max_{w\in\mathcal V}U(w)-2\eta-h
 \ge U(v)-2\eta-h.
 \label{eq:seed-retention}
\end{equation}
\end{lemma}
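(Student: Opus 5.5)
The argument is the same three-term telescoping used in the proof of Proposition~\ref{prop:measurement}, now applied to candidate workflows instead of probes. Let $w^*\in\arg\max_{w\in\mathcal V}U(w)$; the maximum exists whenever $\mathcal V$ is finite, which is the case for a search with finitely many accepted edits. For infinite $\mathcal V$ I would pick $w^*$ with $U(w^*)\ge\sup U-\gamma$ and send $\gamma\to0$ at the end. I assume, as the statement implicitly does, that $w_{\mathrm{out}}\in\mathcal V$, so the uniform accuracy bound applies to it.

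The key steps, in order, are the following.
\begin{align}
 U(w^*)-U(w_{\mathrm{out}})
 &=\bigl(U(w^*)-\widehat U(w^*)\bigr)
   +\bigl(\widehat U(w^*)-\widehat U(w_{\mathrm{out}})\bigr)
   +\bigl(\widehat U(w_{\mathrm{out}})-U(w_{\mathrm{out}})\bigr).
\end{align}
\begin{itemize}
\item The first summand is at most $\eta$ by the uniform accuracy hypothesis.
\item The middle summand is at most $\max_{w\in\mathcal V}\widehat U(w)-\widehat U(w_{\mathrm{out}})\le h$ by the near-maximality hypothesis on the recorded estimates.
\item The third summand is at most $\eta$, again by uniform accuracy, using $w_{\mathrm{out}}\in\mathcal V$.
\end{itemize}
Summing the three bounds gives $U(w_{\mathrm{out}})\ge\max_{w\in\mathcal V}U(w)-2\eta-h$.

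The second inequality in \eqref{eq:seed-retention} is immediate: $v\in\mathcal V$ implies $\max_{w\in\mathcal V}U(w)\ge U(v)$.

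There is no real obstacle here; the only points needing care are bookkeeping. First, the accuracy event must hold simultaneously for every candidate, including $w_{\mathrm{out}}$ and the unknown maximizer. Second, this fact should be kept separate from any probability statement: obtaining $\eta$ with high probability, for example by Hoeffding plus a union bound over the $|\mathcal V|$ candidates as in Proposition~\ref{prop:measurement}, is a separate argument and is not needed for this deterministic lemma.

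Finally, I would remark how this combines with Proposition~\ref{prop:seed}. Setting $U(v)=U(v_{\hat k})$, the output expert inherits the bound $\max_k r_k-\alpha-\beta-2\epsilon-\tau-2\eta-h$.
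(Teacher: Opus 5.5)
Your proof is correct and is essentially the paper's argument. The paper writes the same chain $U(w_{\mathrm{out}})\ge\widehat U(w_{\mathrm{out}})-\eta\ge\widehat U(w)-h-\eta\ge U(w)-h-2\eta$ for an arbitrary $w\in\mathcal V$ and then maximizes over $w$; working with every $w$ instead of a chosen maximizer $w^*$ makes your $\gamma$-approximation for infinite $\mathcal V$ unnecessary, and, like you, the paper implicitly uses $w_{\mathrm{out}}\in\mathcal V$.
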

\begin{proof}
For every $w\in\mathcal V$,
$U(w_{\mathrm{out}})\ge\widehat U(w_{\mathrm{out}})-\eta
\ge\widehat U(w)-h-\eta\ge U(w)-h-2\eta$.
Maximize over $w$.
\end{proof}

For $M$ candidates frozen independently of $n$ independently sampled held-out
queries, with independent execution randomness across queries, bounded query
scores and a union bound yield the sufficient radius
$\eta=\sqrt{\log(2M/\delta)/(2n)}$. Candidate scores may be dependent
across workflows. Repeating executions on the same $n$ queries does not turn
the population sample size into $nS$. Importantly, substituting the number
of \emph{observed} candidates after adapting them to validation failures does
not justify this finite-class guarantee. It requires a prespecified class,
fresh validation with appropriate error allocation, or a separate final
holdout. An intact external test evaluates the selected procedure but does
not retroactively certify every accepted edit.

The implemented acceptance rule retains the incumbent unless a proposal's
recorded score exceeds it by a margin $h$. With fixed recorded scores, the
final incumbent is within $h$ of the best evaluated score: a rejected
candidate was at most its then-incumbent plus $h$, and incumbent scores only
increase. This is an empirical property. The margin $2/n$ used in our search
is not a calibrated confidence radius. An accepted edit is guaranteed to
improve population utility under uniform error $\eta$ only when its measured
increment exceeds $2\eta$. Our adaptive validation protocol has no such
established uniform certificate.

\paragraph{A hard prior introduces an exclusion cost.}
Let $U_j(w)$ be expected utility on a fixed cluster $j$, and let
$\mathcal H$ denote a search class, distinct from the retained expert bank
$\mathcal W$. For a permitted class
$\mathcal H_j^{\mathrm{prior}}\subseteq\mathcal H$, define
$L_j=\sup_{w\in\mathcal H}U_j(w)
-\sup_{w\in\mathcal H_j^{\mathrm{prior}}}U_j(w)$.
If every excluded workflow can be transformed into a permitted one while
losing at most $\zeta_j$, then $L_j\le\zeta_j$, by applying the
transformation to an arbitrarily near-optimal workflow. Low standalone probe
reward is insufficient for this condition: two operators can each have
reward zero alone and reward one together. The seed exemption ensures an
available starting workflow; it does not establish that an operator ban is
harmless. Similarly, if each guided proposal hits a target set with
conditional probability at least $p$ given any no-hit history, $T$ proposals
hit it with probability at least $1-(1-p)^T$, by multiplying conditional
no-hit probabilities. A speedup requires evidence that guidance improves
this probability under a matched budget; using fingerprints in a prompt
alone does not supply that evidence.

\subsection{A conditional bound for the complete pipeline}
\label{app:composed-bound}

The preceding statements can be combined without assuming that the panel
is sufficient for all generated workflows. The comparator is explicitly the
population best \emph{probe}, not an unrestricted workflow oracle.
Let $L=r_C(z(Q))$, $\pi_j=\Pr(L=j)$, and for nonempty clusters define
$r_{jk}=\mathbb E[f_k(Q)\mid L=j]$ and
$U_j(w)=\mathbb E[R_w(Q)\mid L=j]$. The population assignment $L$ uses
expected affinities; it is distinct from experimental assignment A, which
uses measured test-query affinities. Define the cluster heterogeneity cost
\begin{equation}
 H_C=\mathbb E\max_k f_k(Q)-\sum_{j=1}^J\pi_j\max_k r_{jk}\ge0.
 \label{eq:niche-heterogeneity}
\end{equation}
Nonnegativity follows by interchanging a maximum and a conditional
expectation. This is the price, relative to per-query probe selection, of
requiring one probe decision per population cluster; generated experts may
recover or exceed this lost value.

\begin{theorem}[Conditional composition relative to the probe oracle]
\label{thm:composed-bound}
Fix the protocol, normalization, centroids, encoder, and expert library.
For each cluster with $\pi_j>0$, assume the profile-accuracy and seed-transfer
conditions of Proposition~\ref{prop:seed} hold with parameters
$(\alpha_j,\beta_j,\epsilon_j,\tau_j)$ for the population cluster $L=j$.
Assume the final expert $w_j$ satisfies Lemma~\ref{lem:seed-retention}
with error $\eta_j$ and selection deficit $h_j$. Set
\begin{equation}
 B_j=\alpha_j+\beta_j+2\epsilon_j+\tau_j+2\eta_j+h_j.
\end{equation}
On these simultaneous accuracy events, for every $t>0$,
\begin{align}
 &\mathbb E\left[\max_k f_k(Q)
       -R_{w_{r_C(g_\theta(X))}}(Q)\right]\\
 &\qquad\le H_C+\sum_{j=1}^J\pi_j B_j
       +\Pr\{m_C(z(Q))\le t\}
       +\frac{\mathbb E\|g_\theta(X)-z(Q)\|_2^2}{t^2}.
 \label{eq:composed-bound}
\end{align}
All expectations and probabilities on this event are over a fresh query
under the fixed fitted objects. Empty clusters contribute zero.
\end{theorem}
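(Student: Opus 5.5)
The plan is to decompose the regret against the per-query probe oracle into three pieces: a heterogeneity piece, a per-cluster construction piece, and a routing piece. Write $\hat L=r_C(g_\theta(X))$ and $L=r_C(z(Q))$. Adding and subtracting $R_{w_L}(Q)$ gives
\begin{equation*}
\max_k f_k(Q)-R_{w_{\hat L}}(Q)
=\bigl[\max_k f_k(Q)-R_{w_L}(Q)\bigr]+\bigl[R_{w_L}(Q)-R_{w_{\hat L}}(Q)\bigr].
\end{equation*}
We take expectations of each bracket separately over a fresh query, holding the fitted objects fixed and working on the stated accuracy events.

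For the first bracket, condition on $L=j$ for clusters with $\pi_j>0$. The definition of $U_j$ gives $\mathbb E[R_{w_L}(Q)]=\sum_j\pi_j U_j(w_j)$. Adding and subtracting $\sum_j\pi_j\max_k r_{jk}$ yields
\begin{equation*}
\mathbb E\bigl[\max_k f_k(Q)-R_{w_L}(Q)\bigr]=H_C+\sum_j\pi_j\bigl(\max_k r_{jk}-U_j(w_j)\bigr),
\end{equation*}
which follows directly from \eqref{eq:niche-heterogeneity}. Each summand is then bounded in two steps.
\begin{itemize}
\item Proposition~\ref{prop:seed}, applied to cluster $j$ with $r_k=r_{jk}$, gives $U_j(v_{\hat k_j})\ge\max_k r_{jk}-\alpha_j-\beta_j-2\epsilon_j-\tau_j$.
\item Lemma~\ref{lem:seed-retention}, applied to the candidate set containing the seed, gives $U_j(w_j)\ge U_j(v_{\hat k_j})-2\eta_j-h_j$.
\end{itemize}
Chaining the two gives $\max_k r_{jk}-U_j(w_j)\le B_j$, and empty clusters carry weight zero.

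For the second bracket, the argument closing the proof of Proposition~\ref{prop:routing} applies. If $\hat L=L$, the two expected rewards coincide. Otherwise their difference has magnitude at most one, since $R_w\in[0,1]$. Hence
\begin{equation*}
\mathbb E\bigl[R_{w_L}(Q)-R_{w_{\hat L}}(Q)\bigr]\le\Pr\{\hat L\ne L\},
\end{equation*}
and \eqref{eq:routing} bounds this probability by $\Pr\{m_C(z(Q))\le t\}+\mathbb E\|g_\theta(X)-z(Q)\|_2^2/t^2$ for every $t>0$. Summing the two brackets gives \eqref{eq:composed-bound}.

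The main obstacle is bookkeeping rather than any inequality. Proposition~\ref{prop:seed} and Lemma~\ref{lem:seed-retention} must be read with respect to the population cluster $\{L=j\}$, defined by expected affinities, rather than the measured training clusters. The hypotheses are stated exactly in that form, with $U_j$ and $r_{jk}$ conditioned on $L=j$, so the appeal is legitimate. I would still note explicitly that the seed-transfer term $\tau_j$ and the errors $\eta_j,h_j$ refer to $U_j$, so that the chain stays within one utility functional.

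One point needs care. The routing term compares the predicted route with the population route $L$, not with the measured route. This is why measurement noise enters only through $\epsilon_j$ and not through a separate term like \eqref{eq:measured-route}.
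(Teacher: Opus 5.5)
Your proposal is correct and follows essentially the same argument as the paper. You chain Proposition~\ref{prop:seed} with Lemma~\ref{lem:seed-retention} to get $U_j(w_j)\ge\max_k r_{jk}-B_j$, use the definition of $H_C$ to bound the regret of the population route $L$, and charge the switch from $L$ to the predicted route to the disagreement probability bounded in Equation~\eqref{eq:routing}; the only difference is presentational, since you split the regret into two brackets up front while the paper first lower-bounds $\mathbb E R_{w_L}(Q)$ and then subtracts the disagreement probability.
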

\begin{proof}
Proposition~\ref{prop:seed} and Lemma~\ref{lem:seed-retention} imply
$U_j(w_j)\ge\max_k r_{jk}-B_j$. Hence
\begin{align}
 \mathbb E R_{w_L}(Q)
 &=\sum_j\pi_jU_j(w_j)
 \ge\sum_j\pi_j\max_k r_{jk}-\sum_j\pi_jB_j\\
 &=\mathbb E\max_k f_k(Q)-H_C-\sum_j\pi_jB_j.
\end{align}
Replacing $L$ by the predicted route reduces expected reward by at most
their disagreement probability. Apply
Equation~\eqref{eq:routing-probability} and rearrange.
\end{proof}

This theorem distinguishes within-cluster heterogeneity, missing executable
capabilities, standardized-ranking distortion, probe-to-seed transfer,
selection uncertainty, and encoder decision error. It does not establish
that any term is small. In particular, the observed clusters were fitted to
noisy fingerprints, the measured profiles do not directly sample the latent
clusters in the theorem, and adaptive search does not provide the stated
selection event. Endpoint or scoring changes also change $f$ and $R_w$.
The result therefore specifies effective boundaries and targets for
validation, rather than certifying the existing pipeline or comparing it to
every possible generated workflow.

\section{Implementation and design rationale}
\label{app:method}
\subsection{Probe panel and its interpretation}
The panel is a set of concrete interventions on one executor, not a claim that the names of reasoning methods form an orthogonal or complete basis. Table~\ref{tab:probe-map} distinguishes the measured implementations from their generator-side approximations. Related publications motivate the computational families; the local prompts, budgets, and feedback contracts define the actual experiments. In particular, the measured tool-use and ReAct probes have execution access that the guided interpreter lacks, and the generator's persona branch does not reproduce the measured three-agent debate.

\begin{table}[t]
\caption{Measured probe families and their mapping into the guided interpreter. A mapping is an implementation approximation, not an equivalence assertion. CoT is always used for the interpreter's base candidates.}
\label{tab:probe-map}
\centering\small
\begin{tabularx}{\linewidth}{@{}p{.13\linewidth}>{\raggedright\arraybackslash}X>{\raggedright\arraybackslash}X@{}}
\toprule
Probe & Measured computational intervention & Guided interpreter mapping \\
\midrule
Direct & One formatted response & Direct flag is inert; base candidate still uses CoT. \\
CoT & Step-by-step reasoning prompt \citep{cot2022} & CoT base candidate. \\
CoT-SC & Five inner samples plus answer aggregation \citep{selfconsistency2023} & Sampling and selection; seed uses three samples. \\
ToT & Bounded branching, evaluation and search \citep{tot2023} & Sampling and selection; no tree search. \\
Self-refine & Critique and revision \citep{selfrefine2023} & One checking/rewrite call. \\
Reflexion & Outcome feedback, reflection and retry \citep{reflexion2023} & Same revision flag; no separate feedback-memory loop. \\
RAG & Same-benchmark TF-IDF exemplar retrieval; query excluded & CoT flag; no retrieval in generated expert. \\
ReAct & Bounded action--observation interaction \citep{react2023} & Unavailable; cannot seed its behavior. \\
Tool use & Python execution or public-test feedback, domain dependent & Unavailable; cannot seed its behavior. \\
Debate & Independent debaters and a judge \citep{debate2024} & Two fixed persona candidates plus selection. \\
Role team & Three complementary roles and a coordinator & Same two-persona branch as debate. \\
Decompose & Subproblem decomposition and synthesis \citep{leasttomost2023} & One plan prepended to subsequent candidate calls. \\
\bottomrule
\end{tabularx}
\end{table}

The canonical mixed-task panel contains $K=12$ probes, and each fingerprint coordinate averages $S=3$ \emph{outer} repetitions. For the CoT-SC probe, the five inner samples occur within each repetition and must not be counted as five independent fingerprint observations. The score retains its task-specific meaning; fractional coding-test utility is not a calibrated probability of solving an unseen hidden test set. All normalizations preserve the raw coordinates in the stored artifact.

The relative-normalization variant subtracts a query's mean outcome and divides by the resulting vector norm; this changes the estimand and removes absolute difficulty information. It does not inherit the affine-noise decomposition in Equation~\eqref{eq:prediction-decomposition}. Comparing raw, standardized, and relative coordinates is thus a test of a design tradeoff, not just numerical preprocessing. The default affine standardization is specified below.

\subsection{Encoder and cluster construction}
Let $\mathcal T=\{q_i\}_{i=1}^{N}$ denote the construction pool and
$\mathcal T_{\mathrm{enc}}\subseteq\mathcal T$ the encoder's outer-training
subset, with $N_{\mathrm{enc}}=|\mathcal T_{\mathrm{enc}}|$.
Each query has cached probe outcomes. Write
$\bar f(q)=(\bar f_1(q),\ldots,\bar f_K(q))^\top$ for its measured mean
scores from Equation~\eqref{eq:affinity}. We fit the coordinate means
$\mu_k$ and standard deviations $\sigma_k$ on $\mathcal T_{\mathrm{enc}}$
and define the measured embedding by
\begin{equation}
\bar z_k(q)=\frac{\bar f_k(q)-\mu_k}{\sigma_k},\qquad
\bar z(q)=\Lambda^{-1}\bigl(\bar f(q)-\mu\bigr),
\quad \Lambda=\operatorname{diag}(\sigma_1,\ldots,\sigma_K).
\label{eq:standardization}
\end{equation}
Here $\mu=(\mu_1,\ldots,\mu_K)^\top$, and $\Lambda$ has positive diagonal
scales. A fitted standard deviation below $10^{-8}$ is replaced by one,
avoiding division by zero; a nonconstant but low-variance coordinate can
still receive a large weight. Positive standardized values indicate scores
above that probe's training mean, not probabilities. The fitted normalization
is used unchanged for all remaining construction fingerprints and any later
measured test fingerprints.

The encoder predicts $\hat z(q)=g_\theta(\psi(q))$, where $\psi$ is the
frozen text feature extractor and $g_\theta$ is the trainable head with
parameters $\theta$. Its target is $\bar z(q)$. The training objective is
\begin{equation}
\mathcal L(\theta)=\frac{1}{N_{\mathrm{enc}}}\sum_{q\in\mathcal T_{\mathrm{enc}}}
\left[\lambda\bigl(1-\cos(\hat z(q),\bar z(q))\bigr)
+\frac{1-\lambda}{K}\|\hat z(q)-\bar z(q)\|_2^2\right],
\quad \lambda=0.7.
\label{eq:loss}
\end{equation}
Here $\cos(u,v)$ denotes cosine similarity. Its loss term encourages
agreement in the relative pattern of probe responses; the second term
penalizes coordinate error in the space used for cluster assignment.
Dividing by $K$ makes the squared error a per-coordinate mean. The objective
depends on $\theta$ through $\hat z(q)$, and $\lambda$ is a fixed
implementation choice. The output already has standardized coordinates and
is not standardized again at deployment.

The canonical mixed-task encoder uses L2-normalized frozen BGE-large-en-v1.5 features with a maximum input length of 512 tokens \citep{bge_modelcard}. Long questions can therefore lose relevant content before the trainable head; Proposition~\ref{prop:routing} does not eliminate that information loss. The head is $1024\to512\to512\to K$, with ReLU and dropout $.1$ after each hidden layer. Adam uses learning rate $.001$, weight decay $.0001$, and batch size 32. The construction pool has $N=600$ queries. A stratified outer split reserves 15\% (90 queries) for encoder validation, leaving $N_{\mathrm{enc}}=510$ for encoder fitting. Within those 510 rows, another 15\% selects the epoch count by combined loss, with maximum 200 epochs and patience 25. The final head is reinitialized and trained on the full outer-training fold for that count. Initialization and outer splitting use seed 13; the inner split uses seed 14. Normalization is fitted on the outer-training rows; it is not refitted to exclude the inner-selection rows. The outer evaluation remains excluded from normalization and epoch selection.

The choice of frozen features is motivated by limited fingerprint supervision and the ability to reuse a cached text representation. The width and loss weights are fixed capacity and optimization choices; no experiment establishes them as optimal. Predicting continuous coordinates instead of cluster labels retains information for alternative partitions and operator profiles, at the cost of learning variations that may be irrelevant to the current expert bank. A direct cluster classifier is therefore an informative ablation.

Cluster construction uses Euclidean $k$-means with $J=6$, initialization seed 13, and ten restarts. The centroids use the encoder checkpoint's normalization. Clustering uses the construction pool, including rows reserved for encoder validation; those rows remain held out for the \emph{encoder} fit but not for the bank construction. The separate test pool evaluates the composed system. Six clusters allow repeated use of one offline expert while retaining tens of construction examples in the smaller clusters. Neither cluster count nor squared-distance clustering is justified by a utility-optimality guarantee. Historical eight-cluster diagnostics are not the deployed six-cluster configuration.

Let $C=(c_1,\ldots,c_J)$ be the fitted centroids, with
$c_j\in\mathbb R^K$. For any vector $v$ in the standardized embedding space,
the assignment rule and construction members are
\begin{equation}
r_C(v)=\argmin_{1\le j\le J}\|v-c_j\|_2^2,\qquad
\mathcal T_j=\{q\in\mathcal T:r_C(\bar z(q))=j\}.
\label{eq:clustering}
\end{equation}
The norm is Euclidean, and ties use a fixed rule. A clustering centroid $c_j$
summarizes its full cluster; the generation embedding $\rho_j$ averages the
search-validation subset defined below and need not equal $c_j$.
The canonical pipeline uses one fitted clustering for construction and
assignment. The recorded embedding-guided experiment instead translates
between two stored clusterings using a training-fitted label mapping.
Appendix~\ref{app:guided-construction} specifies that mapping and its agreement
rates; it is part of the evaluated implementation rather than an exact
identity between the two geometries.

\subsection{Construction backends and their experimental roles}
Section~\ref{sec:generation} separates the information supplied to construction from the optimizer that consumes it. The main mixed-task baseline comparison uses \emph{subset-conditioned construction}: six experts are searched with AFlow on the six affinity-defined training subsets. Its baseline operator vocabulary contains Custom and ScEnsemble. Each search permits five optimization rounds after its initial candidate. The global comparator $D$ uses the same search rule and round limit on the undivided training pool. The validation cap is 600 in both configurations, so the cluster sizes sum to the same nominal 600 candidate--query validations per round as $D$. Six searches nevertheless require more proposal calls, and different workflows incur different execution costs; this is not equality of aggregate construction tokens or dollars. Semantic and random partitions change the examples assigned to each search while preserving this construction interface.

The SWE-bench experiment instantiates the same cluster-to-bank idea by selecting among policy prompts within a fixed localize--repair executor. Its global $D$ is the training-selected policy, whereas the separately reported AFlow adaptation searches policy instructions. Finally, the embedding-guided search experiment below supplies the search procedure with affinity-derived seeds, an operator prior, and failure-conditioned feedback. It directly exercises the use of cluster embeddings described in Section~\ref{sec:generation}. Results from the subset-conditioned and policy-selection settings establish the utility of organizing the bank through affinities; the embedding-guided search experiment evaluates the additional construction mechanism. The backend and policy variants are component studies of the common representation, encoder, and generation framework.

\subsection{Expert language for embedding-guided search}
An expert is a dictionary containing an unordered operator set, an integer sample count between one and five, and an instruction string. Optional decomposition adds a short plan to the question. The interpreter always produces the specified number of CoT candidates; when either persona flag is present, two more candidates are added rather than substituted. If multiple candidates survive and sampling-ensemble or persona selection is enabled, a separate call chooses an answer. An optional revision then checks and rewrites it. All scored answer stages receive the same per-domain answer-format instruction. Empty final outputs and scoring failures receive zero; truncated outputs are retained and scored.

If the configured stages complete, their nominal call count is
\begin{equation}
c(w)=d+n+2p+\mathbf 1\{n+2p>1\}\mathbf 1\{e=1\text{ or }p=1\}+r,
\end{equation}
where $d,p,e,r$ indicate decomposition, persona generation, ensemble selection, and revision, and $n$ is the sample count. With fewer surviving candidates, the actual selection stage can differ; empirical costs use recorded calls. When $n>1$ without a selection flag, the interpreter returns the first surviving candidate. Listing both persona aliases does not add another branch. These facts are disclosed to the optimizer and should be used when counting distinct executed workflows.

For each cluster, $V_j\subseteq\mathcal T_j$ is the search-validation set,
$n_j=|V_j|$, and $H_j=\mathcal T_j\setminus V_j$ contains the remaining
construction examples. The recorded optimizer receives statistics involving
$H_j$, so this remainder is not an independent holdout. The cluster embedding
is $\rho_j=n_j^{-1}\sum_{q\in V_j}\bar z(q)$. Search also receives the
cached probe outcomes for individual queries in $\mathcal T_j$: the average
embedding alone cannot provide probe outcomes on the current failures.
The LLM optimizer receives selected texts of queries on which the current
workflow scores below one, mean raw probe scores on the failed and solved
subsets, whole-cluster raw probe averages, the current workflow, and previous
attempts. These raw feedback summaries are distinct from the standardized
coordinates used to rank probes in $\rho_j$.
For a candidate $w$, define its recorded search score as
\begin{equation}
\widehat U_{V_j}(w)=\frac{1}{n_j}\sum_{q\in V_j}\operatorname{score}(w,q),
\end{equation}
where each summand is the task score recorded from one execution of that
candidate. This empirical mean is distinct from the expected cluster utility
$U_j(w)$ in the analysis. A proposed workflow $w'$ replaces the current
workflow $w$ only when
\begin{equation}
\widehat U_{V_j}(w')>\widehat U_{V_j}(w)+\frac{2}{n_j}.
\label{eq:search-acceptance}
\end{equation}

\begin{algorithm}[t]
\caption{Recorded embedding-guided search for one cluster}
\label{alg:generation}
\begin{algorithmic}[1]
\Require Construction members $\mathcal T_j$, cached per-query probe outcomes, operator map, LLM optimizer, proposal budget and resource limits
\State Split $\mathcal T_j$ into search-validation set $V_j$ and remaining examples $H_j$
\State Set $n_j\gets|V_j|$; compute $\rho_j$ from $V_j$ and rank its probe coordinates
\State Choose the highest-ranked runnable probe and map it to seed $v_j$
\State Restrict operators using the bottom-three rule, exempting the mapped seed operator
\State $w\gets v_j$; evaluate $\widehat U_{V_j}(w)$ and retain per-query outcomes
\For{each remaining proposal round (two in the recorded run)}
  \State Stop if a configured cost or time guard has been reached
  \State Identify failures of $w$ on $V_j$ and their probe outcomes
  \State Form feedback with failure texts, mean raw probe scores for failed/solved/all cluster members, $w$, and prior attempts
  \State Ask the LLM optimizer for a specification edit; parse and sanitize it
  \If{invalid or previously evaluated specification}
    \State Record the event and continue
  \EndIf
  \State Evaluate candidate $w'$ on the same $V_j$
  \If{$\widehat U_{V_j}(w')>\widehat U_{V_j}(w)+2/n_j$}
    \State $w\gets w'$; retain its outcomes for the next round
  \EndIf
\EndFor
\State \Return $w$ and the full construction trace
\end{algorithmic}
\end{algorithm}

The local validation cap is 60 queries, with at least 20 reserved locally when possible; small clusters use the remaining available validation members. Construction uses one execution per candidate, while separate final comparisons use repeated executions. Search also stops at its configured cost or wall-time guard. These guards bound resource use rather than optimize the same objective as score acceptance. The two-grid-step margin was introduced to avoid accepting small fluctuations; with heterogeneous fractional scores and adaptive proposals, it is not a confidence bound.

The bottom-three restriction ranks within the cluster, excludes an operator only when all mapped probes are low-ranked, and exempts the single operator mapped from the selected seed probe. This handles many-to-one mappings without letting one low-ranked alias veto a well-supported operator. The implementation also has a nonempty-set fallback. The prior reduces available search choices; standalone outcomes do not identify all beneficial interactions, so its exclusion loss may be positive. The three-round budget was a limited construction allocation, not evidence of search convergence.

The optimizer is asked to edit one \emph{axis}: sample count, addition to the operator set, removal from it, or instruction. For recognized edit types, the sanitizer resets fields outside that axis. An addition/removal request may still change several flags on its axis. Unknown edit types are logged and do not receive the same axis constraint, although vocabulary and prior filtering still apply. No unrestricted program synthesis or Monte Carlo tree search occurs in this interpreter.

\paragraph{Selection independence.}
The cluster embedding and seed use $V_j$. However, the recorded feedback's whole-cluster column averages over $\mathcal T_j=V_j\cup H_j$, exposing probe outcomes from the remaining examples $H_j$. Thus $H_j$ is not an independent certificate of the search procedure. The later 300-query test pool is a separate evaluation pool, but controls and analyses added after observing its results remain exploratory. To apply the independent-selection result in Appendix~\ref{app:proofs}, candidates must be frozen before an untouched selection sample is evaluated, or a justified uniform guarantee must cover the actual adaptive procedure. Reusing $V_j$ and simply requiring a positive empirical margin does not meet this condition.

\subsection{Why the design choices remain ablations}
The central objects are the outcome coordinates, their text approximation, and their use in construction. Changing a probe subset, normalization, feature backbone, number of clusters, assignment rule, or search optimizer changes an implementation of those objects. The corresponding intervention should hold the remaining executor, score contract, split, and resource allocation fixed. Behavioral versus semantic or random construction needs the same operator language and proposal budget; seed-only versus searched experts needs the same assignments. Measured A versus text B changes information access, so it diagnoses amortization rather than supplying a deployable competitor. These controls follow directly from the quantity each proposition characterizes.

The design also separates construction cost from serving cost. Probing and expert generation are offline investments; a text-only assignment incurs local encoder computation and the chosen expert's calls. An amortization advantage requires both a stated quality target and lower per-query serving cost than the comparator. Since guided B uses more calls than single-call CoT without an established quality improvement, this experiment does not justify a break-even claim against CoT.

\section{Experimental protocol and additional evidence}
\label{app:experiments}

\subsection{Data, scores, and information access}

The mixed-task evaluation uses the repository's persisted \texttt{mixed\_hard\_xl} pool with seed 13. Training contains 200 AIME, 200 GPQA, 100 CodeContests, and 100 LiveCodeBench queries; testing contains 100, 100, 50, and 50, respectively. The pool search procedure samples from the available source validation and test material, assigns stable source identifiers, and persists disjoint project training/test memberships. These are project splits, not a claim to reproduce the benchmarks' official evaluation protocols. Affinity measurements use labels or executable tests offline. The encoder sees the query text; a deployed $B$ route does not measure a test-query fingerprint. Outcome-assisted $A$ routing does measure the test-query fingerprint, and its results include information unavailable to deployment.

AIME is scored by the existing mathematical answer checker. GPQA extracts the option following the last \texttt{Answer:} marker, falling back to a letter match if that marker is absent. Every scored September arm receives the appropriate answer-format instruction. For coding, the scorer extracts Python code, executes at most 12 available public stdin/stdout tests with a ten-second limit per test, normalizes output whitespace, and returns the fraction passed. Unsupported functional or interactive tests are skipped. Missing runnable tests score zero. Public-test success need not imply correctness on hidden tests, and the aggregate metric weights each query equally rather than each benchmark equally. The implementation's scoring and loading files are included in the provenance manifest.

\subsection{Historical probe measurements and cross-repeat analysis}

The twelve mixed-task probe coordinates, in stored order, are direct, CoT, CoT self-consistency, tree-of-thought, self-refinement, reflexion, retrieval, ReAct, tool use, debate, role team, and decomposition. These labels name concrete repository implementations; they do not imply exact reproduction of every original paper's system. Both historical model panels have 21,600 unique training cells and 10,800 unique test cells: 600/300 queries, twelve probes, and three repeats. The offline audit verified completeness of every query--probe--repeat tensor and used the recorded scores without rerunning models.

The retrieval probe uses same-benchmark TF--IDF demonstrations with available reference answers and excludes the current query. The current generator explicitly supplies the training corpus when invoked for test fingerprints. The saved sample ledgers retain answer previews rather than complete prompts or retrieved identifiers, so they do not independently certify the retrieval membership of every historical call. The September routed-probe arm selects no retrieval probe. For future releases, complete retrieval provenance should accompany the scorer and model revision, since conditioning on another evaluation query's label would change the information contract.

For query $i$, probe $k$, and held-out repeat $s$, let
\[
 a_{ik}^{(-s)}=\frac{1}{2}\sum_{t\ne s}Y_{ikt},
 \qquad
 \widehat{k}_{i}^{(-s)}\in\arg\max_k a_{ik}^{(-s)}.
\]
The cross-repeat score is $3^{-1}\sum_sY_{i,\widehat{k}_{i}^{(-s)},s}$. Training-priority ties use decreasing mean score across the 600 training queries, with stored probe order as a final deterministic tie rule. The uniform-tie analysis computes the exact mean held-out score across all tied maximizers, rather than drawing an additional random tie. The fixed probe is tool use for GPT-4o-mini and CoT self-consistency for Haiku. Its test score also averages the three repeats. The observed oracle instead takes the maximum of the three-repeat averages and hence reuses execution noise for selection and evaluation.

We freshly computed the cross-repeat confidence intervals from the saved ledgers using 10,000 benchmark-stratified bootstrap resamples, seed 13. Each resample draws queries with replacement within their benchmark and preserves the observed benchmark counts. The three overlapping held-out-repeat folds are averaged per query before resampling; they are not treated as independent observations. These intervals describe variability across this query mixture conditional on the saved training comparator and recorded executions. They do not incorporate retraining, changed model revisions, or uncertainty from choosing the probe panel. In particular, the Haiku training-priority improvement is $0.0343$ with interval $[0.0110,0.0592]$, while the uniform-tie interval includes zero. This difference makes tie handling part of the scientific result.

\subsection{Encoder and representation ablations}

The expanded-panel study fits an MLP on frozen BGE features, using a 510/90 outer training/validation partition stratified by benchmark. Its checkpoint records nested inner-split epoch selection followed by refitting on outer-training rows, selecting 24 epochs. Normalization statistics are fit on the permitted training partition. The reported MSE averages over coordinates; converting it to the expected squared Euclidean error used in the analysis requires multiplication by the number of coordinates. The study's 90-query validation metrics should not be confused with performance on the separate 300-query routing test set. Text truncation in the frozen backbone is a further information boundary: relevant information outside its retained input cannot be recovered by the MLP.

The paired eight/twelve-probe comparison preserves query membership and the eight shared measurements. Adding tree-of-thought, reflexion, ReAct, and role team increases the number of distinct measured fingerprints from 281 to 329 and effective rank from 2.6698 to 3.0819. On the 90 validation queries, the observed menu oracle increases from $0.4324$ to $0.4503$. This is increased descriptive resolution and observed menu coverage; because maxima are computed on the same measurements, it is not independent evidence of latent workflow sufficiency. Native cosine/MSE scores across different target widths are also not directly paired estimands. The repository contains two neighbor-recall conventions yielding different values; we omit a headline neighbor-recall claim pending reconciliation of tie handling and distance conventions.

\begin{table}[t]
\centering\small
\caption{Historical mixed-task routing ablations. All values within a model column share that model's July evaluation artifact. These are not same-harness comparisons to September guided generation.}
\label{tab:all-historical}
\begin{tabular}{lrr}
\toprule
Variant & GPT-4o-mini & Haiku 4.5\\
\midrule
Global searched workflow ($D$) & .2722 & .5381\\
Twelve-probe measured $A$ & .2742 & .6097\\
Twelve-probe predicted $B$ & .2922 & .5214\\
Eight-probe measured $A$ & .2844 & .6053\\
Eight-probe predicted $B$ & .3072 & .5083\\
Semantic BGE partition & .2939 & .5303\\
Semantic Qwen3 partition & .3050 & .5861\\
Random partition & .2767 & .5228\\
\bottomrule
\end{tabular}
\end{table}

Table~\ref{tab:all-historical} treats panel width and partition choice as ablations. Historical inference uses query-paired confidence intervals and sign-flip tests with Holm adjustment within the stored comparison families. The twelve-probe $B-D$ differences are $0.0200$ ($[-0.0150,0.0558]$) for GPT-4o-mini and $-0.0167$ ($[-0.0661,0.0333]$) for Haiku; both adjusted $p$-values equal one. For a separate execution-attribution diagnostic, we hold the paper-native $B$ routes fixed and replace each searched expert's stored output score by the score of its training-selected raw probe. Searched-minus-probe differences are $-0.0077$ ($[-0.0500,0.0355]$) and $-0.1442$ ($[-0.1931,-0.0950]$), respectively, using freshly computed stratified intervals. Probe scores average three executions, whereas the searched workflow scores follow their stored evaluation protocol. This identifies a possible loss at the execution interface; it is not a controlled test of a particular search algorithm.

\subsection{Guided construction, splits, and executable semantics}
\label{app:guided-construction}

\begin{table}[t]
\centering\small
\caption{Six guided searches. Each has three rounds including initialization, and one execution per search-validation query per evaluated candidate. C3 is the only cluster with an accepted edit. C0/C4 and C2/C5 execute identical final specifications.}
\label{tab:guided-searches}
\begin{tabular}{lrrllr}
\toprule
Cluster & Train & Search/local holdout & Top probe & Final behavior & Calls\\
\midrule
C0 &301&60/241&Self-refine&CoT + refinement&2\\
C1 &83&60/23&Direct&Single CoT&1\\
C2 &42&22/20&CoT-SC&Three samples + selection&4\\
C3 &48&28/20&ReAct&Plan + four samples + selection&6\\
C4 &82&60/22&Self-refine&CoT + refinement&2\\
C5 &44&24/20&Tool use&Three samples + selection&4\\
\bottomrule
\end{tabular}
\end{table}

In C3, the three-sample seed scores $0.2679$ on the 28 search-validation queries. Increasing to four samples yields $0.4107$; adding decomposition yields $0.4821$. These are adaptive search scores, not test estimates. The final increment lies at the $2/28$ acceptance boundary in exact fractions and passes the recorded floating-point comparison.

The search uses an unordered set of JSON flags in a fixed plan--candidate--selection--refinement interpreter. Single candidates use temperature zero; multiple candidates use temperature $0.7$. An optional persona branch adds two candidates. Debate and role-team flags share this branch; direct and CoT flags both retain the always-on CoT instruction. Reflexion maps to self-refinement and tree-of-thought to a sampling ensemble. ReAct and tool use have no executor branch. Thus similarity of probe and operator names does not certify equality of their executed behavior.

Seed profiles use search-validation members rather than local holdout members. However, the optimizer's marginal-value table computes its \texttt{ALL} column over all cluster members, including the local holdout. Furthermore, candidate proposals and acceptance reuse the same search-validation outcomes. Consequently neither the local holdout nor adaptively reused validation establishes the independent-selection assumptions of the conditional analysis. Only C1/C2 have stored local-holdout executions; we do not use them as independent confirmation. The outer 300 test queries do not participate in these searches.

The deployed expert bank was constructed in the \texttt{mixedhardxl12} clustering, while recorded paper routes use a separate \texttt{paper\_mhxl12} clustering. A Hungarian bijection fitted solely to their 600 training assignments maps paper labels $0,1,2,3,4,5$ to expert labels $4,2,1,3,5,0$, with 517/600 agreements. Native versus translated measured test assignments agree on only 80.33\% of queries. This bridge is part of the evaluated implementation and is not equivalent to training and deploying within one fixed cluster geometry.

\subsection{September controlled comparisons and cost accounting}

The guided, direct, and routed-probe arms share the same 300 queries and official OpenAI endpoint. The single-call controls were executed subsequently on the same queries using the guided executor's unmodified single-call implementation. Their comparison is paired by query, not by identical random seeds. Guided and single-call controls use a 3,000-token cap; the pipeline Direct and probe-menu arms use 4,000 tokens and provider-default temperature. All use the scorer described above. Errors or empty final outputs score zero. None of these six arms has a recorded failed call or final-query error, but 40 guided calls were truncated (11 in $A$, 29 in $B$), as were nine calls in each single-call control. Truncated outputs remain scored as returned.

\begin{table}[t]
\centering\small
\caption{Stored September results on 300 queries and three repeats. SD is across the three pass means, not a standard error over queries. Costs are token-based estimates using the rates configured for those runs. Outcome-assisted $A$ serving calls omit the cost of obtaining its test fingerprints.}
\label{tab:guided-full}
\begin{tabular}{lrrrr}
\toprule
Arm & Mean & Pass SD & Calls/query & Total USD\\
\midrule
Direct & .2763 & .0031 & 1.000 & .2701\\
Single bare & .3121 & .0083 & 1.000 & .3159\\
Single CoT & .3295 & .0180 & 1.000 & .4702\\
Probe-menu-$B$ & .2955 & .0133 & 2.924 & 1.0924\\
Guided-$B$ & .3358 & .0097 & 2.113 & .9919\\
Guided-$A$ (diagnostic) & .3413 & .0169 & 2.403 & 1.1613\\
\bottomrule
\end{tabular}
\end{table}

The displayed guided contrasts preserve the stored 10,000-resample percentile query-bootstrap procedure: average each query across its three executions, then resample queries without benchmark stratification. These exploratory intervals are unadjusted for multiplicity. They do not resample construction, encoder fitting, or provider drift. Guided-$B$ minus probe-menu-$B$ is $0.0404$ ($[0.0042,0.0773]$), but the arms differ in executor, prompting, decoding, and token cap; the contrast cannot isolate embedding guidance. Six construction runs total 2,544 calls and an estimated USD~1.2038 including the limited local-holdout arms. Offline probe generation and encoder fitting are additional costs, excluded from serving calls/query. Three separate repeats were verified from per-pass records even though the merged four-arm artifact retains its original two-repeat command arguments.

\subsection{Expert-bank and software-engineering boundary checks}

A separate single-execution $6\times300$ matrix evaluates every generated expert on every mixed-task test query. It yields guided-$A$/$B$ readouts of $0.3397/0.3386$, versus $0.3323$ for the expectation of uniform expert selection. These values are a distinct execution sample from Table~\ref{tab:guided-full}. Cluster-specific superiority is not established: no cluster's own-versus-best-other interval excludes zero. The test-selected best fixed expert scores $0.3567$, but selection on the same test set makes it a diagnostic rather than a valid training-selected baseline. Counterfactual seed-family readouts cover only five of twelve probes and three seed families; they do not replace matched searches with guided versus unguided conditioning.

The historical SWE boundary study uses 500 SWE-bench Verified instances~\citep{swebench2024}, a persisted repository-stratified 300/200 train/test split, and a single-turn Agentless-lite localize--repair executor. Twelve policy texts form the measured panel; ReAct, reflexion, and tool-use labels are single-turn analogues here. The metric is actual harness resolution, not the older file-localization proxy elsewhere in the repository. A per-policy logistic predictor replaces the mixed-task MLP, and small clusters merge from six to four. Predicted routing chooses direct on all 200 tests and scores $0.200$, versus the training-selected global retrieval policy's $0.175$; the paired interval for the difference is $[-0.020,0.070]$. All failures are retained as unresolved. The 6,000-cell grid contains 1,565 errored cells, including 210 infrastructure/no-report residues. This is a transfer stress test with changed executor and predictor, not replication of the guided generator.

\subsection{Baseline definitions and adaptations}
\label{app:baseline-adaptations}

We compare published mechanisms after adapting them to the project's query pools, executors, answer formats, and scorers. The adaptation names in Table~\ref{tab:main-baselines} specify the evaluated systems. They do not stand for the original papers' full configurations or reported benchmark scores. Query-level error handling is fail-closed: unsuccessful execution contributes zero.

\paragraph{Global and cluster-conditioned AFlow.}
The mixed-task bank uses AFlow's workflow search~\citep{aflow2025}, with Custom and ScEnsemble operators, five optimization rounds after initialization, parent sample size four, one validation execution per query, and the executor itself as optimizer. The published default has a longer search and a stronger optimizer. All six cluster partitions together contain the same 600 training queries evaluated by the undivided $D$ search, matching nominal validation-query allocation per round. Six searches nevertheless require six proposal streams rather than one and can incur different token costs; aggregate offline budgets are not equal. Semantic BGE, Qwen3, and random partitions use this same search protocol. The SWE bank selects among twelve policy texts on each cluster; its $D$ is the global training-selected retrieval policy. The distinct SWE AFlow baseline searches a policy on the shared, repository-stratified 150-instance training subset, retaining AFlow's parent-selection rule and one edit per round. Its five-round winner resolves 38/200 test instances.

\paragraph{ADAS.}
Our ADAS adaptation~\citep{adas2025} retains archive-conditioned proposal, reflection/debugging, and bootstrap fitness selection. Its meta-model is GPT-4o-mini or Haiku, matching the executor rather than the original GPT-4-class meta-agent. On mixed tasks it searches agent code; on SWE it searches policy text for the fixed pipeline, from direct/CoT/ReAct-policy seeds over five generations. SWE's winner is the CoT seed (40/200 test resolutions). For Haiku, CPU contention affected the original coding fitness evaluation. We therefore use the clean re-score of all eight archived candidates on the full training pool, selecting Self-Refine by clean training fitness before evaluating on the 300 test queries. Its $0.2489$ replaces the obsolete contended score of $0.4075$.

\paragraph{ScoreFlow.}
The adaptation implements SF-0 candidate generation and an in-context preference-optimization (ICPO) step~\citep{scoreflow2025}; it does not perform Score-DPO weight updates. GPT-4o-mini replaces the fine-tuned workflow generator. Mixed-task inference generates four candidates and selects without test-score feedback; its test utility is $0.2756$. A separate training-selected majority-workflow control gives $0.2758$. On SWE, twelve policy candidates are scored on the common 150 training instances. Preference pairs use the published cubic score-gap weighting; one ICPO refinement is compared against the best SF-0 candidate. SF-0 scores $0.1733$ on training versus ICPO's $0.1533$, and its selected policy resolves 42/200 test instances. The Haiku ScoreFlow configuration was not run.

\paragraph{FlowReasoner.}
Our search-only FlowReasoner adaptation~\citep{flowreasoner2025} omits supervised/reinforcement learning of the meta-reasoner and uses the executor model for proposals. Mixed-task runs search separately for each evaluated query, using its scorer feedback to choose a workflow before three fresh executions of that same query. They are therefore outcome-assisted diagnostics, marked $\dagger$, rather than text-only deployment results. The operator space includes Programmer. GPT-4o-mini completes 297/300 queries; its saved completed-only mean is $0.3601$. Table~\ref{tab:main-baselines} uses $0.3565$ after assigning zero to the three missing queries, retaining the shared 300-query denominator. Haiku completes all 300 and scores $0.7465$. Logged costs average USD~0.00660 and 0.21060 per completed query, with mean wall time 284.2 and 268.7 seconds. Missing-query costs are not fully reconstructed. On SWE, the adaptation instead searches one pool-level policy on the disjoint 150-instance training subset; the stock-pipeline seed wins and resolves 41/200 tests. That SWE result does not use test-instance scorer feedback during selection.

\paragraph{Policy routing on SWE.}
MasRouter~\citep{masrouter2025} instantiates the cost-aware routing objective over the twelve fixed Agentless policies, using its sampled-selection protocol, five training epochs, and cost rate 200. It resolves 34/200 instances at USD~0.00227 per instance. FrugalGPT~\citep{frugalgpt2024} trains issue-text scorers and selects a cascade on training data. At budget USD~0.004041 it selects direct $\rightarrow$ debate $\rightarrow$ retrieval; test utility is $0.2050$ at USD~0.00240. Only 21/200 queries escalate beyond direct. Both methods have access to the same stored policy outcomes for training; per-policy test costs are similar, restricting the available cost-quality trade-off. The higher-budget FrugalGPT configuration selects the identical cascade.

\subsection{The full probe panel and repeated-execution evidence}

\begin{table}[t]
\centering\small
\caption{Test means of the twelve probe coordinates. Mixed-task entries average three executions. SWE entries are single-turn policy analogues under the Agentless pipeline, not implementations of their multi-turn namesakes. $\ddagger$ marks mixed-task probes using execution feedback, an information advantage over the original Custom/ScEnsemble search space.}
\label{tab:full-probes}
\begin{tabular}{lrrr}
\toprule
Probe & Mixed / GPT-4o-mini & Mixed / Haiku & SWE\\
\midrule
Direct & .2872 & .5694 & .200\\
CoT & .2852 & .5953 & .205\\
Self-consistency & .3260 & .6894 & .190\\
Tree-of-thought & .2911 & .5919 & .200\\
Self-refinement & .2993 & .6814 & .145\\
Reflexion$^{\ddagger}$ & .3340 & .6656 & .125\\
Retrieval & .2919 & .6184 & .175\\
ReAct$^{\ddagger}$ & .2552 & .5137 & .165\\
Tool use$^{\ddagger}$ & .3481 & .6100 & .150\\
Debate & .3069 & .6814 & .180\\
Role team & .3091 & .6483 & .210\\
Decomposition & .3254 & .6343 & .055\\
\bottomrule
\end{tabular}
\end{table}

\begin{table}[t]
\centering\small
\caption{Repeated-execution evidence for the affinity panel on 300 test queries. The observed oracle selects and evaluates using the same three-repeat mean. Cross-repeat selection evaluates a held-out execution; both selectors require test-query outcomes. Uniform ties average over tied maximizing probes.}
\label{tab:repeatability}
\begin{tabular}{lrrrr}
\toprule
Model & Fixed & Observed oracle & Cross-repeat & Uniform ties\\
\midrule
GPT-4o-mini & .3481 & .5580 & .4059 & .3862\\
Haiku 4.5 & .6894 & .8154 & .7237 & .6909\\
\bottomrule
\end{tabular}
\end{table}

Probe rankings change with the executor and pipeline: the training-selected GPT-4o-mini mixed-task probe is tool use, Haiku selects self-consistency, and SWE selects retrieval. The mixed/SWE training-grid rank correlation is Spearman $\rho=-0.449$ ($p=0.143$, twelve coordinates); excluding decomposition yields $\rho\approx-0.530$. Decomposition frequently conflicts with SWE localization formatting, so this comparison combines reasoning, format, and executor effects. We find no evidence supporting direct transfer of one regime's probe ranking to another.

\subsection{Operator and panel ablations}

\begin{table}[t]
\centering\small
\caption{GPT-4o-mini operator ablation: 31 fresh searches add Programmer while preserving clusterings, routing, and search caps. Six BGE executions remain failed in the saved artifact and are counted as zero; this row is .2861 on the full denominator rather than a completed-only mean.}
\label{tab:programmer}
\begin{tabular}{lrrr}
\toprule
Variant & Custom + ScEnsemble & + Programmer & Difference\\
\midrule
Twelve-probe $B$ & .2922 & .3153 & +.0231\\
Twelve-probe $A$ & .2742 & .2922 & +.0181\\
Eight-probe $B$ & .3072 & .2742 & -.0331\\
Eight-probe $A$ & .2844 & .2814 & -.0031\\
Semantic Qwen3 & .3050 & .3064 & +.0014\\
Semantic BGE & .2939 & .2861 & -.0078\\
Random & .2767 & .3042 & +.0275\\
Global $D$ & .2722 & .2783 & +.0061\\
\bottomrule
\end{tabular}
\end{table}

Adding executable code is motivated by the gap between the strongest measured coding probes and the original expert operator set. Twelve-probe $B$ reaches $0.3153$, but random partitions also improve and eight-probe $B$ decreases. Because each bank is searched anew, the experiment changes both operator availability and realized search trajectories; it cannot isolate an operator effect from search variation. Table~\ref{tab:all-historical} reports the independent eight/twelve-panel comparisons. The higher-dimensional panel improves descriptive resolution, while downstream utility depends jointly on measurement, assignment, and the learned expert bank.

\subsection{Label-free outcomes and additional diagnostic controls}

A post-hoc alternative estimates affinities from within-probe agreement and cross-probe answer consensus, with calibration fitted only on training queries. The saved mixed-task ledger covers only the experts actually executed by $A$ or $B$, so alternative routes can be scored only where they select one of those recorded experts. On GPT-4o-mini's eight-probe agreement variant, coverage is 201/300 and the covered-subset gain over the same subset of $D$ is $0.0402$ (95\% CI $[-0.005,0.086]$, exploratory $p=0.090$). Haiku's twelve-probe agreement variant covers 279/300 and changes utility by $-0.0143$ ($[-0.068,0.039]$), while measured affinities retain a $0.0806$ advantage on that subset. Full-pool lower/upper bounds for uncovered outcomes are $[0.199,0.529]$ and $[0.498,0.568]$, respectively. These are coverage-restricted diagnostic results, not full-test estimates.

The stored answer previews contain only the first 1,200 characters; approximately 82\% of Haiku generations are clipped. The analysis therefore evaluates the available preview features, not what complete generations could support. A related SWE label-free analysis uses the retired ReAct regime and yields $0.085$ versus $D$'s $0.090$; it is not an Agentless result. The retired ReAct direct score is $0.088$ on its clean eight-policy grid and the FlowReasoner transfer attempt scores $0.060$. We exclude the compromised twelve-policy ReAct scoring-churn results from the principal comparisons.

For the guided bank, evaluating all six experts on all 300 queries produces expert means $0.3197,0.3236,0.3567,0.3403,0.3233,0.3303$. The observed per-query maximum is $0.4961$, although 194 queries give identical scores for all six executions. These maxima select on their own evaluation outcomes and hence quantify observed opportunity rather than an achievable text-only policy. The single-call ladder (Direct $0.2763$, bare $0.3121$, CoT $0.3295$, guided-$A$ $0.3413$) further separates prompt/decoder changes from added workflow computation. Differences between arms are interpreted as complete-configuration contrasts, not a causal decomposition of the embedding's contribution.

\subsection{Heterogeneous-model routing and cost-quality comparisons}
\label{app:router-cost}

\begin{table}[t]
\centering\small
\caption{Separate four-model-pool study on the 300 mixed-task test queries. Costs are logged token usage at the configured prices. The local quantized Llama has zero API charge; GPU cost is excluded. These methods choose among different backbones and are not executor-matched comparisons to Table~\ref{tab:main-baselines}.}
\label{tab:model-routing}
\begin{tabular}{lrr}
\toprule
Method / operating point & Utility & USD/query\\
\midrule
Single GPT-4o-mini & .2969 & .000358\\
Single Haiku 4.5 & .5994 & .005353\\
Single Gemini Flash & .5533 & .006978\\
Single Llama-3.1-8B (AWQ-INT4) & .1886 & .000000\\
MasRouter, default cost rate & .2664 & .002012\\
FrugalGPT, low budget & .3969 & .001636\\
FrugalGPT, medium budget & .4839 & .003213\\
FrugalGPT, high budget & .6228 & .006818\\
Observed per-query pool oracle & .769 & .00256\\
\bottomrule
\end{tabular}
\end{table}

The pool contains GPT-4o-mini, Haiku 4.5, Gemini Flash, and locally served Llama-3.1-8B AWQ-INT4. These replace unavailable original-paper model versions, and DeepSeek is omitted. MasRouter uses its cost-aware policy-gradient objective, learning rate 0.01, batch size 16, five epochs, cost rate 200, and seed 1234. Its selections favor cheaper backbones (142/300 Llama slots); all 18 test errors are scored zero. FrugalGPT replaces exact-match scoring with our mixed-task utility and adapts token limits/formatting to these tasks. Its low/medium budgets select GPT-4o-mini $\rightarrow$ Llama $\rightarrow$ Haiku; the highest budget selects Haiku $\rightarrow$ Llama $\rightarrow$ Gemini. That cascade attains $0.6228$, above its single-Haiku reference $0.5994$, at a higher cost. The pool oracle chooses the cheapest model attaining each query's observed maximum and is not executable without outcomes.

\paragraph{Measurement, construction, and serving costs.}
The full Haiku probe program (eight/twelve probes, training/test) records 115,330 calls and USD~603.90. The SWE 6,000-cell grid records approximately USD~14.02, with 2.25 calls and USD~0.0023 per instance--policy pair. Historical GPT-4o-mini probe/search tokens were not comprehensively metered. Under the Haiku program's average sample cost, a twelve-probe, three-repeat test fingerprint costs approximately USD~0.4026 before executing its selected expert; the SWE twelve-policy fingerprint costs approximately USD~0.0276. The corresponding GPT-4o-mini estimate is roughly USD~0.028, derived from the Haiku calls-per-sample ratio and the separately metered direct-call price, rather than measured on those grids. This cost belongs to diagnostic $A$ only; $B$ uses one frozen encoder forward pass and no routing-time LLM calls. Guided construction's separately metered USD~1.2038 excludes the earlier probe program. Table~\ref{tab:guided-full} gives its serving costs. These distinctions make offline investment, outcome-assisted evaluation, and deployable serving directly interpretable.

\subsection{Statistical families and scope of baseline comparisons}

For each historical mixed-task backbone, the stored Holm families contain seven comparisons against $D$, seven between routing variants, and five against Direct/train-selected probes. The seven $D$ comparisons are twelve/eight-probe $A/B$, BGE, Qwen3, and random. The seven routing comparisons are eight-probe $B$ versus Qwen3, BGE, twelve-probe $B$, eight-probe $A$, and random; Qwen3 versus BGE; and twelve-probe $B$ versus twelve-probe $A$. The five probe comparisons are twelve/eight-probe $B$ versus Direct and the training-selected probe, plus $D$ versus Direct. SWE has separate four-comparison $A$ and $B$ families against $D$, Direct, BGE, and random. The prespecified primary comparison in each regime is twelve-probe $B-D$; none is significant. We retain this definition while presenting the representation analyses as additional findings. The SWE $A-D$ result also has exact McNemar $p=0.001312$ (16 corrected/two broken). Main-table ordering against ADAS, ScoreFlow, FlowReasoner, MasRouter, and FrugalGPT reports point estimates; it does not imply unreported significant pairwise differences. Guided contrasts and new diagnostic analyses remain exploratory and unadjusted.

\subsection{Reproducibility and outstanding attribution}

The accompanying evidence manifests record source paths, SHA-256 hashes, query memberships, per-query cross-repeat scores, recomputed means, and statistical conventions. The original evidence audit and expanded baseline audit perform no API calls and leave experimental artifacts unchanged; the latter additionally checks all reported adapted-search means against their saved per-query records and records corrected failure denominators. Historical panel size, semantic/random partitions, and decoder/executor controls are reported as ablations. A matched full guided-versus-unguided construction experiment, independent final candidate selection, capability-complete probe-to-operator mapping, and held-out margin-stratified encoder analysis are not present in the saved evidence. We therefore report no numerical outcomes for these missing interventions and do not infer their success from the conditional bounds.

\end{document}